\def\SOUL@hlpreamble{%
   \setul{\dp\strutbox}{\dimexpr\ht\strutbox+\dp\strutbox\relax}%
   \let\SOUL@stcolor\SOUL@hlcolor
   \SOUL@stpreamble
}
\def \st {\ \ \textnormal{s.t.} \ \ }
\def \ST {\ \ \textnormal{subject to} \ \ }
\def \Re{\mathbb{R}}
\DeclareMathOperator*{\argmin}{arg\,min}
\DeclareMathOperator*{\argmax}{arg\,max}
\theoremstyle{plain}
\newtheorem{corollary}{Corollary}
\newtheorem{lemma}{Lemma}
\newtheorem{proposition}{Proposition}
\newtheorem{theorem}{Theorem}
\newtheorem*{theorem*}{Theorem}
\newtheorem*{corollary*}{Corollary}
\theoremstyle{remark}
\theoremstyle{definition}
\definecolor{orange-left}{rgb}{0.85, 0.325, 0.098}
\definecolor{blue-right}{rgb}{0, 0.447, 0.741}
\definecolor{alizarin}{rgb}{0.82, 0.1, 0.26}
\definecolor{A}{rgb}{0.6350, 0.0780, 0.1840}
\definecolor{B}{rgb}{0.5, 0, 0.5}
\definecolor{C}{rgb}{0.3010, 0.7450, 0.9330}
\definecolor{D}{rgb}{0.8500, 0.3250, 0.0980}
\definecolor{E}{rgb}{0.4660, 0.6740, 0.1880}
\definecolor{F}{rgb}{0.9290, 0.6940, 0.1250}
\definecolor{G}{rgb}{0, 0.4470, 0.7410}
\definecolor{H}{rgb}{0, 0, 0}
\definecolor{R4}{rgb}{0.0, 0.4, 0.65}
\definecolor{R6}{rgb}{1.0, 0.22, 0.0}
\definecolor{R7}{rgb}{0.8, 0.0, 0.8}
\definecolor{R8}{rgb}{0.4, 0.22, 0.33}
\definecolor{R9}{rgb}{0.65, 0.04, 0.37}
\newcommand{\Ai}{{\color{A}{\textbf{WIMF}}}}
\newcommand{\B}{{\color{B}{\textbf{WIKSVD}}}}
\newcommand{\C}{{\color{C}{\textbf{PCA}}}}
\newcommand{\E}{{\color{E}{\textbf{DMD}}}}
\newcommand{\Dc}{{\color{D}{\textbf{ICA}}}}
\newcommand{\F}{{\color{F}{\textbf{EMD}}}}
\newcommand{\G}{{\color{G}{\textbf{PCA}}}}
\def\I{\mathbf{I}}
\def\A{\mathbf{A}}
\def\d{\mathbf{d}}
\def\x{\mathbf{x}}
\def\Z{\mathbf{Z}}
\def\Re{\mathbb{R}}
\def\D{\mathbf{D}}
\def\X{\mathbf{X}}
\def\L{\mathbf{L}}
\def\Y{\mathbf{Y}}
\def\k{\mathbf{k}}
\begin{document}

\title{Wave Physics-Informed Matrix Factorizations}

\author{Harsha Vardhan Tetali, Joel B. Harley, Benjamin D. Haeffele
        }

\markboth{To appear in IEEE Transactions on Signal Processing}%
{Shell \MakeLowercase{\textit{et al.}}: A Sample Article Using IEEEtran.cls for IEEE Journals}


\IEEEoverridecommandlockouts
\IEEEpubid{\makebox[\columnwidth]{~\copyright~2023 IEEE, DOI: 10.1109/TSP.2023.3348948 \hfill} \hspace{\columnsep}\makebox[\columnwidth]{ }}

\maketitle

\IEEEpubidadjcol

\begin{abstract}
 With the recent success of representation learning methods, which includes deep learning as a special case, there has been considerable interest in developing techniques that  incorporate known physical constraints into the learned representation. As one example, in many applications that involve a signal propagating through physical media (e.g., optics, acoustics, fluid dynamics, etc), it is known that the dynamics of the signal must satisfy constraints imposed by the wave equation. Here we propose a matrix factorization technique that decomposes such signals into a sum of components, where each component is regularized to ensure that it {nearly} satisfies wave equation constraints. Although our proposed formulation is non-convex, we prove that our model can be efficiently solved to global optimality. Through this line of work we establish theoretical connections between wave-informed learning and filtering theory in signal processing. We further demonstrate the application of this work on modal analysis problems commonly arising in structural diagnostics and prognostics. 
\end{abstract}

\begin{IEEEkeywords}
physics-guided machine learning, dictionary learning, modal decomposition, unsupervised learning
\end{IEEEkeywords}

%
\IEEEpeerreviewmaketitle

\section{Introduction}
%
%
%
%



\IEEEPARstart{R}{epresentation} learning has gained importance in fields that utilize data generated through physical processes, such as weather forecasting \cite{mehrkanoon2019deep}, manufacturing \cite{ringsquandl2017knowledge}, structural health monitoring \cite{zobeiry2020theory}, acoustics \cite{mohamed2012understanding}, and medical imaging \cite{litjens2017survey}. Moreover, learning physically consistent and interpretable features can improve our understanding of the physically viable information about the data and the composition of the system or process generating it. Unfortunately, however, features of the data learned through generic machine learning algorithms typically do not correspond to physically interpretable quantities. 

For example, physically consistent solutions are important to spatio-temporal modal analysis, where recovering modes of vibration yield important information about a physical system or process (e.g., buildings, bridges, space shuttles, satellites, acoustic instruments)  \cite{stanbridge1999modal, YANG2017232, articleKammer, articleCunha, Marshall1985ModalAO}. Furthermore, these problems are also relevant to frequency-wavenumber spectral analysis \cite{1449208}, which is the basis for many advances in RADAR \cite{1449208}, seismic \cite{10.1093/gji/ggy286}, and acoustic \cite{YU201643} beamforming and signal processing, where each sinusoid may correspond to a different source or target. 
Physically consistent solutions provide a trustworthy basis in terms of the physics of the system in contrast to any other basis, which might be obtained by generic machine learning algorithms. This suggests the need for learning algorithms which choose bases that are physically consistent.

\subsection{Related Work}

Modified learning paradigms, suited to different physical application domains, have begun to draw interest \cite{karpatne2017theory}. For example, several researchers have designed physics-informed neural networks to learn approximate solutions to a partial differential equation \cite{BenjaminErichson2019,Raissi2019,Nabian2018,Tartakovsky2018,Long2018}. These have been recently applied to ultrasonic surface waves to extract velocity parameters and denoise data \cite{Shukla2020}. Similar physics-guided neural networks \cite{karpatne2017theory,Karpatne2017a,Lei2018,Mestav2018} use physics-based regularization to improve data processing, demonstrating that regression tasks (such as estimating the temperature throughout a lake) can be more accurate and robust when compared with purely physics-based solutions or purely data-driven solutions. These approaches demonstrate the strong potential for physics-informed machine learning but remain early in their study. 

Representation theory has been applied to other wave-based problems. For example, generative models have been studied for applications in electroencephalography \cite{Luo2020} and seismology \cite{Mosser2020}, although these learning systems do not assume physical knowledge. Other prior work has considered sparse generative representation of ultrasonic waves through the use of sparse signal processing \cite{harley2013sparse}, dictionary learning \cite{alguri2018baseline,alguri2017model,alguri2021sim}, and neural network sparse autoencoders \cite{Alguri2019}. These methods utilize the fact that many waves can be represented as a small, sparse sum of spatial modes and attempt to extract these modes within the data \cite{alguri2018baseline}. These dictionary learning algorithms have further been combined with wave-informed regularizers to incorporate physical knowledge of wave propagation into the solution \cite{Tetali2019, tetali2021wave}.


One of the key challenges {for many of these problems} is that they typically require solving a non-convex optimization problem, {and as a result the learned representation can be highly dependent on details such as initialization or choice of optimization algorithm}. Attention has been devoted recently to the non-convex optimization problems that arise in representation learning, and positive results have begun to emerge in certain settings.  For example, within the context of low-rank matrix recovery, it has been shown that gradient descent is guaranteed to converge to a local minimum and all local minima will be globally optimal under certain circumstances \cite{bhojanapalli2016global, ge2016matrix, park2016non, ge2017no}.
Further, other recent work has studied `structured' matrix factorization problems, where one promotes properties in the matrix factors by imposing regularization on the factorized matrices (e.g., if one desires a factor to be sparse an $\ell_1$ norm penalty could be imposed on one of the matrix factors). It has been shown that such problems can also be solved to global optimality in certain circumstances, but whether such guarantees can be made depends critically on how the regularization is formulated \cite{haeffele2014structured,haeffele2019structured,bach2013convex}.

{\subsection{Contributions} \label{subsection:contributions}
{In this work we make the following contributions. We \emph{first} introduce a \textit{\textbf {wave-informed matrix factorization model}}, which ensures that the recovered components minimize wave equation constraints, allowing for direct interpretability of the factors based on assumed physics.} This is distinguished from traditional factorization methods, such as {principal component analysis (PCA)}, which are not constrained by physical knowledge as well as from {models which explicitly parameterize wave-like factors}, and are often too constrained for a problem, particularly in the presence of noise. In other words, our approach balances the utilization of both prior assumptions from physics and fills the gap between purely data driven methods \cite{doi:10.1177/14759217221075241} and purely physics based methods.

\emph{Second},} although our resulting formulation requires solving a \emph{non-convex} optimization problem, {we identify its form amenable to be provably solved to global optimality} \cite{haeffele2019structured}. {This differs from prior wave-physics informed dictionary learning }\cite{Tetali2019}, {which provides no such guarantees.}

{\emph{Third}, unlike physics-informed neural networks, we show that our wave physics-informed matrix factorization has a satisfying interpretation as an optimal filter problem. Specifically, we show that} our  optimization algorithm {based on} \cite{haeffele2019structured} { reveals a solution is that of an optimal bank of 1st-order Butterworth filters. This interpretability enables confidence in the algorithm and intelligent selection of  hyper-parameters.}

{In summary, this work's novel contributions are in demonstrating the first framework for matrix factorization that is constrained solely on the wave equation. Unlike wave-informed dictionary learning }\cite{Tetali2019}{, we show this algorithm is both highly interpretable from a signal processing perspective and satisfies global optimality conditions based on results the literature } \cite{haeffele2019structured}{. Finally, we evaluate the effectiveness of the algorithm in four scenarios -- homogeneous vibrations, non-homogeneous vibrations, travelling plane waves, and segmented vibrations. We show that this framework outperforms most other methods used the literature for wave mode decomposition, particularly in the presence of high noise and segmented, spatially varying wave behavior. }

\section{Wave-Informed Matrix Factorization}
\label{others}

In this section, we develop a novel matrix factorization framework that allows us to decompose a matrix $\Y$, with each column representing sampling along one dimension (e.g., space), as the product of two matrices $\D \X^\top$, where one of the matrices will be {softly} constrained to {nearly} satisfy wave equation constraints.  Additionally, our model will learn the number of modes (columns in $\D$ and $\X$) directly from the data without specifying this \textit{a priori}. 

{In subsection} \ref{subsection:problem_formulation}, {we introduce our objective function. This subsection can be seen as an extension of the previously proposed wave-informed dictionary learning framework to a matrix factorization framework. In} \ref{section:model}, {we go over a few ideas from the optimization literature that are useful to solve the formulated problem. In} \ref{subsection:polar} {and} \ref{subsection:WIMF}, {we describe the main portions of the wave-physics informed matrix factorization algorithm.} {In} \ref{sec:sig_procc}, {we describe an interesting connection of this framework to signal processing. We emphasize that the novel parts of this work is in subsections} \ref{subsection:polar} {and} \ref{sec:sig_procc}. The work optimization discussions in \ref{section:model} {are specific cases of previous work and are shown for the benefit of the reader, particularly those in wave physics who may not be familiar with the optimization literature.}

\subsection{Problem Formulation}
\label{subsection:problem_formulation}
{The problem formulation described in this section is inspired from the wave-physics regularizer proposed in \cite{Tetali2019}. In this work, we additionally justify the wave-physics informed problem formulation in \cite{Tetali2019} as a naturally arising idea from the theory of solutions of partial differential equations.} A key concept in the solution of partial differential equations is the notion of separation.  For example, given a PDE in space $\ell$ and time $t$, we can often assume the solution is of the form 
\begin{align}
f(\ell,t) = d(\ell) x(t) \; . 
\end{align}
In the case of linear PDEs, we find that if $d_i(\ell) x_i(t)$ are solutions for all $i \in [1,N]$, then  
\begin{align}
f_i(\ell, t) = \sum_{i=1}^{N} {\alpha_i} d_i(\ell) x_i(t)
\end{align}
is also a solution {for any arbitrary choice of linear coefficients, $\{\alpha_{i}\}_{1=1}^N$}. For a discrete approximation of this solution, we note that the product of continuous functions $d_i(\ell) x_i(t)$ on $\ell \in [0,L]$ and $t \in [0,T]$ can be approximated as an outer product of two vectors, leading directly to a matrix factorization model, {where we further note that without loss of generality the scaling coefficients $(\alpha_i)$ can be absorbed into the factorized matrices}. 

Specifically, we consider matrices $(\D \in \Re^{N_d \times N},\X \in \Re^{N_t \times N})$ defined such that 
\begin{align}
\mathbf{d}_i &= \left[ d_i(0), d_i(\Delta \ell), \cdots, d_i(N_d \Delta \ell)  \right]^{\top} \nonumber \\
\mathbf{x}_i &= \left[ x_i(0), x_i(\Delta t), \cdots, x_i(N_t \Delta t)  \right]^{\top} \nonumber \\ 
\mathbf{D} \mathbf{X}^{\top} &= \sum_{i=1}^{N} d_i(\ell) x_i(t) = \sum_{i=1}^{N} \d_i \x_i^{\top}  \; ,
\end{align}
where $\d_i$ and $\x_i$ denote columns of $\D$ and $\X$, respectively, and $N_d = \left \lfloor{L/\Delta \ell}\right \rfloor$ and $N_t = \left \lfloor{T/\Delta t}\right \rfloor)$. 
Recall, we also wish to enforce physical consistency in our model, which we accomplish through theory-guided regularization (see \cite{karpatne2017theory}) in a cost function which 1) captures how our matrix factorization model matches the observed samples and 2) incorporates a regularization term which enforces physical consistency in the matrix factors. Specifically, we consider a model of the form
\begin{equation}
    \underset{\mathbf{D}, \mathbf{X}}{\text{ } \min \text{ }} { \tfrac{1}{2}\| \mathbf{Y} - \mathbf{D}\mathbf{X}^{\top} \|^2_F + \lambda \Theta (\mathbf{D}, \mathbf{X})}
    \label{eqn:obj_final}
\end{equation}
where $\| \mathbf{Y} - \mathbf{D}\mathbf{X}^{\top} \|^2_F$ is a data consistency loss, $\Theta (\mathbf{D}, \mathbf{X})$ is our physics-informed regularizer, and $\lambda > 0$ is a hyper-parameter to balance the trade-off between fitting the data and satisfying our model assumptions. 
When $\Theta (\mathbf{D}, \mathbf{X})$ is defined by the wave equation, we call this \emph{wave-informed matrix factorization}.

We derive a wave-informed $\Theta (\mathbf{D}, \mathbf{X})$ by considering the one-dimensional wave equation evaluated at $d_i(l)x_i(t)$
\begin{eqnarray}
    \frac{\partial^2 \left[ d_i(\ell)x_i(t) \right] }{\partial l^2} &=& \frac{1}{c^2} \frac{\partial^2 \left[ d_i(\ell)x_i(t) \right] }{\partial t^2} 
    \label{eqn:wave_equation}
\end{eqnarray}
The above equation constrains that each $d_i(\ell)x_i(t)$ component ($i \in [1,N]$) is a wave. This can be seen as decomposing the data into a superposition of simpler waves. Furthermore, the Fourier transform in time on both sides of \eqref{eqn:wave_equation} yields
\begin{eqnarray}
    \frac{\partial^2 \left[ d_i(\ell) \right] }{\partial l^2}X_i(\omega) &=& \frac{-\omega^2}{c^2} d_i(\ell) X_i(\omega) \\
    \frac{\partial^2 \left[ d_i(\ell) \right] }{\partial \ell^2} &=& \frac{-\omega^2}{c^2} d_i(\ell) 
    \label{eqn:space_eigen_value}
\end{eqnarray}
at points where $X_i(\omega) \neq 0$. This is specifically known as the Helmholtz equation. The Helmholtz equation is analytically solvable for constant $\omega/c$ and the solution takes the form $d_i(\ell) = A \sin(\omega \ell/c) + B \cos(\omega \ell/c)$, and the constants $A$ and $B$ can be determined by initial conditions. While the Helmholtz equation is often applied in the frequency domain, it is also applicable in time, where the relationship must be true for every frequency-velocity ($\omega,c$) mode in the data. Hence, if the data is sparse in these domains (as is often the case in modal analysis), then we can use the Helmholtz constraint to estimate and decompose all ($\omega,c$) modes in $\Y$. 

We discretize the Helmholtz equation such that
\begin{eqnarray}
    \mathbf{L} \d_i &=& -k_i^2 \d_i 
    \label{eqn:discrete_wave_eqn}
\end{eqnarray}
where $k_i = \omega_i / c_i$, also known as the angular wavenumber, and $\L$ is a discrete second derivative operator. The specific form of $\L$ depends on the boundary conditions (e.g., Dirichlet, Dirichlet-Neumann, etc), which can be readily found in the numerical methods literature (see for example \cite{strang2007computational, golub2013matrix}).
%
In this paper, we use the Laplacian matrix ($\L$) defined by
\begin{eqnarray}
    \mathbf{L} &=& \frac{1}{(\Delta l)^2} \begin{bmatrix}
    -2 & 1 & 0 & 0 & 0 &\cdots & 0 \\
    1 & -2 & 1 & 0 & 0 & \cdots & 0 \\
    0 & 1 & -2 & 1 & 0 & \cdots & 0 \\
    \vdots & \vdots & \vdots & \vdots & \vdots & \ddots & \vdots \\
    0 & 0 & 0 & 0 & 0 & \cdots & -2 \\
    \end{bmatrix}.
    \label{eqn:Lap_mat}
\end{eqnarray}
Note that to reduce notational complexity, we assume $\Delta l = 1$  without any loss of generality, as any change in $\Delta l$ can be absorbed into the value of $k_i$ after plugging \eqref{eqn:Lap_mat} into \eqref{eqn:discrete_wave_eqn}.  

We also specify the dependence on $i$ owing to the fact that $\omega/c$ and $d_i(l)$ determine each other. 
For the factorization to satisfy the wave equation, we desire that \eqref{eqn:discrete_wave_eqn} is satisfied for some value of $k_i$, which we promote with the regularizer 
\begin{align}
\min_{k_i} \| \mathbf{L} \d_i + k_i^2 \d_i \|^2_F \; .
\end{align}
Note that {for the above regularizer it is easily shown that the optimal value for $k_i^2$ must lie in the range $[0, \frac{4}{\Delta l}]$, where $\frac{4}{\Delta l}$ is the smallest eigenvalue of $-\L$ \cite{chung2000discrete}. 
Since we specifically chose $\Delta l = 1$ (without loss of generality), we have that 
the optimal value of $k_i^2$ must lie in the range $[0,4]$.}

In addition to requiring that the columns of $\D$ satisfy the wave equation, we also would like to constrain the number of modes to be minimal.  We accomplish this by adding the squared Frobenius norms to both $\D$ and $\X$, which is known to induce low-rank solutions in the product $\D \X^\top$ due to connections with the variational form of the nuclear norm \cite{haeffele2019structured,haeffele2014structured,srebro2005rank}. Hence, our complete regularization is defined by
\begin{align}
\label{eq:theta}
\Theta(\D,\X,\k) &= \tfrac{1}{2} \sum_{i=1}^N \bar \theta(\d_i,\x_i,k_i)  \\
\label{eq:theta2}
  \bar \theta(\d_i,\x_i, k_i) &= \left( \|\x_i\|_F^2  + \|\d_i\|_F^2 \right)  +  \!  \gamma \min_{k_i} \|\L \d_i \!  + \! k_i^2 \d_i \|_F^2  \nonumber \\
   &= \|\x_i\|_F^2  + \min_{k_i} \d_i^\top \left( \I + \gamma (\L + \I k_i^2)^2  \right)  \d_i \nonumber \\
   &= \|\x_i\|_F^2  + \min_{k_i} \d_i^\top \A(k_i) \d_i  \\
\label{eq:Ak}
   \text{where} \ \A(k_i) &= \I + \gamma (\L + \I k_i^2)^2 .
\end{align}
%
With this regularization function, the complete cost function is defined by
\begin{align}
\label{eq:main_obj}
\min_{\substack{\D, \X, \k, N}} \! \tfrac{1}{2} \|\Y-\D\X^\top\|_F^2 \! + \! \tfrac{\lambda}{2} \! \|\X\|_F^2 \! + \! \tfrac{\lambda}{2} \! \sum_{i=1}^N \d_i^\top \A(k_i) \d_i 
\end{align}
where note that we are also optimizing over the number of modes $(N)$ and the wavenumber $\mathbf{k} = \left[ k_1, \cdots, k_N \right]^{\top}$ for each mode in the data.

\subsection{Model Optimization}
\label{section:model}
Our model in \eqref{eq:main_obj} is inherently non-convex in $(\D,\X,\k)$ due to the matrix factorization model and further complicated by the fact that we are additionally searching over the number of columns/modes $N$. However, despite the challenge of non-convex optimization, we can solve \eqref{eq:main_obj} by leveraging prior results from optimization theory for structured matrix factorization \cite{haeffele2014structured,bach2013convex}. {Note that, for the remainder of this subsection we reiterate the concepts and framework described in }\cite{haeffele2014structured,haeffele2019structured} {leading to adapting the algorithm mentioned therein to our specific model and finally obtaining an optimal solution of the objective function} \eqref{eq:main_obj}. For a matrix factorization problem of the form
\begin{equation}
\label{eq:gen_obj}
\min_{N}{\min_{\D, \X}} \; L(\D \X^\top) + \lambda \sum_{i=1}^N \bar \theta(\d_i,\x_i)
\end{equation}
where $L(\widehat \Y)$ is any function which is convex and once differentiable in $\widehat \Y$ and $\widebar \theta(\d,\x)$ is any function that satisfies the following three conditions
\begin{enumerate}
\item $\widebar \theta(\alpha \d, \alpha \x) = \alpha^2 \widebar \theta(\d, \x), \ \forall (\d,\x)$, $\forall \alpha \geq 0$.
\item $\widebar \theta(\d, \x) \geq 0, \ \forall (\d,\x)$.
\item For all sequences $(\d^{(n)},\x^{(n)})$ such that $\|\d^{(n)}(\x^{(n)})^\top\| \rightarrow \infty$ then $\widebar \theta(\d^{(n)},\x^{(n)}) \rightarrow \infty$ ,
\end{enumerate}
an algorithm exists that can obtain the global minimum solution. In supplementary material~\ref{ap:global_optimal}, we show that the wave-informed cost function satisfies all three conditions, where the optimization over the $k_i$ parameters is done inside $\theta$ as in \eqref{eq:theta2}. 
%


Moreover, in \cite{haeffele2019structured} it is shown that a given point $(\widetilde \D, \widetilde \X)$ is a globally optimal solution of \eqref{eq:gen_obj} iff the following two conditions are satisfied:
\begin{enumerate}
    \item$
 \langle -\nabla L( \widetilde \D \widetilde \X^\top), \widetilde \D \widetilde \X^\top \rangle = \lambda \sum_{i=1}^N \bar \theta(\widetilde \d_i, \widetilde \x_i) 
$
\item $
 \Omega_{\bar \theta}^\circ (-\tfrac{1}{\lambda} \nabla L(\widetilde \D \widetilde \X^\top)) \leq 1
$
\end{enumerate}
where in our scenario
\begin{align}
    -\tfrac{1}{\lambda} \nabla(L(\D \X^{\top})) &= \tfrac{1}{\lambda} \left( \Y - \D \X^\top \right) \nonumber
\end{align}
denotes the gradient w.r.t. the matrix product and $\Omega_{\bar \theta}^\circ(\cdot)$ is referred to as the polar problem, which is defined as 
\begin{equation}
\label{eq:polar_def}
\Omega_{\bar \theta}^\circ (\Z) \equiv \sup_{\mathbf{d},\mathbf{x}} \mathbf{d}^\top \Z \mathbf{x} \st \bar \theta(\mathbf{d},\mathbf{x}) \leq 1.
\end{equation}
It is further shown in \cite{haeffele2019structured} (Proposition 3)  that the first condition above will always be satisfied for any first-order stationary point $(\widetilde \D, \widetilde \X)$, and that if a given point is not globally optimal then the objective function \eqref{eq:gen_obj} can always be decreased by augmenting the current factorization by a solution to the polar problem as a new column:
%
\begin{align}
\label{eq:polarproblem}
(\D, \X)  &\leftarrow \left( \left[ \widetilde \D, \ \tau \mathbf{d}^* \right], \left[ \widetilde \D, \ \tau \mathbf{x}^* \right] \right)  \!\\
\!  \mathbf{d}^*, \mathbf{x}^* &\in \argmax_{\mathbf{d},\mathbf{x}} \tfrac{1}{\lambda} \mathbf{d}^\top \! \left(  \Y - \D \X^\top \right) \x \st \bar \theta(\mathbf{d},\mathbf{x}) \leq 1   \nonumber 
\end{align}
for an appropriate choice of step size $\tau > 0$.  

Despite the fact that \cite{haeffele2019structured} gives a meta-algorithm that is guaranteed to solve \eqref{eq:gen_obj} to global optimality, one needs to solve the polar problem in \eqref{eq:polar_def} to both verify global optimality and escape non-optimal stationary points, which is itself another optimization problem. This presents a potentially significant computational challenge in practice, as for many choices of regularization function, solving the polar problem is NP-hard~\cite{bach2013convex}. For example, if the regularization on the matrix columns is simply the sum of the $\ell_2$ norms of the columns, then the polar problem is easily solved as a maximum eigenvalue problem, but if we either replace one of the $\ell_2$ norms by the $\ell_\infty$ norm, the problem becomes NP-hard \cite{bach2013convex}.  Fortunately, however, in the next section we show that the polar problem that results from our wave-informed factorization model is tractable and leads to polynomial time global optimality.


\subsection{Solving the Polar Problem}
\label{subsection:polar}

Specifically, for our regularization function \eqref{eq:theta}, we show the following result which allows the polar problem to be solved efficiently, enabling efficient and guaranteed optimization  \eqref{eq:main_obj}.
\begin{theorem}
\label{thm:polar}
For \eqref{eq:main_obj}, the polar problem in \eqref{eq:polar_def} is 
\begin{align}
\Omega_\theta^\circ (\Z) = & \max_{\d, \x, k} \, \d^\top \Z \x  \nonumber \\
&\mathrm{s.t.} ~ \d^\top \A(k) \d \leq 1, \|\x\|^2_F \leq 1, \ 0 \leq k \leq 2. \nonumber
\end{align}
where $\A(k)$ is as defined in \eqref{eq:Ak}.
Further, if we define $k^*$ as%
\begin{equation}
k^* = \argmax_{k \in [0,2]} \| \A(k)^{-1/2} \Z \|_2 \; . 
\label{eq:k_max}
\end{equation}
Then the optimal values of $\d,\x, k$ are given as $\d^* = \A(k^*)^{-1/2} \widebar \d$, $\x^* = \widebar \x$, and $k^*$.  Where $\widebar \d$ and $\widebar \x$ are the left and right singular vectors, respectively, associated with the largest singular value of $\A(k^*)^{-1/2} \Z$. 
\end{theorem}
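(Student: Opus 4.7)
The plan is to unfold the definition of the polar problem, decouple its two constraint pieces via a scaling argument, and reduce the remaining inner problem to a singular value decomposition.

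I would start from the definition $\Omega_{\bar\theta}^\circ(\Z)=\sup_{\d,\x} \d^\top \Z \x$ subject to $\bar\theta(\d,\x)\le 1$. Reading $\bar\theta$ off the regularizer in \eqref{eq:main_obj}, we have $\bar\theta(\d,\x)=\tfrac{1}{2}\|\x\|^2+\tfrac{1}{2}\min_{k}\d^\top\A(k)\d$. Because the minimization over $k$ sits inside an inequality constraint, the feasibility condition $\bar\theta(\d,\x)\le 1$ is equivalent to the existence of some $k$ with $\|\x\|^2+\d^\top\A(k)\d\le 2$. Hence the polar problem rewrites as a joint maximization $\max_{\d,\x,k}\d^\top \Z \x$ subject to $\|\x\|^2+\d^\top\A(k)\d\le 2$, which puts $k$ on an equal footing with $\d$ and $\x$ as a decision variable.

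To decouple the budget between $\d$ and $\x$, I would rescale any nontrivial feasible point. Writing $\d=\alpha\hat\d$ with $\hat\d^\top\A(k)\hat\d=1$ and $\x=\beta\hat\x$ with $\|\hat\x\|=1$, the objective becomes $\alpha\beta\,\hat\d^\top \Z \hat\x$ under $\alpha^2+\beta^2\le 2$. Since $\alpha\beta\le \tfrac12(\alpha^2+\beta^2)\le 1$ with equality at $\alpha=\beta=1$, the combined budget is tight at optimum in the symmetric way $\alpha=\beta=1$, which is precisely equivalent to imposing the \emph{separate} constraints $\d^\top\A(k)\d\le 1$ and $\|\x\|^2\le 1$ as in the theorem statement (flipping the sign of $\hat\x$ handles the case $\hat\d^\top \Z\hat\x<0$). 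To justify the range $k\in[0,2]$, I would observe that $k$ enters only through the constraint on $\d$, so the optimal $k$ minimizes $\d^\top\A(k)\d=\|\d\|^2+\gamma\|(\L+k^2\I)\d\|^2$. Setting $t=k^2\ge 0$, this is a convex quadratic in $t$ whose unconstrained minimizer is $t^\star=-\d^\top\L\d/\|\d\|^2$, and this ratio lies in $[0,4]$ because $-\L$ is positive semidefinite with all eigenvalues in $[0,4]$ (as already noted after \eqref{eqn:Lap_mat}). Hence restricting to $k\in[0,2]$ is without loss of generality.

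With $k\in[0,2]$ fixed, the inner problem has the form $\max \d^\top \Z \x$ subject to $\d^\top \A(k)\d\le 1$ and $\|\x\|\le 1$. Since $\A(k)\succeq\I\succ 0$, the substitution $\tilde\d=\A(k)^{1/2}\d$ is an invertible change of variables that transforms the inner problem into $\max_{\|\tilde\d\|\le 1,\|\x\|\le 1}\tilde\d^\top(\A(k)^{-1/2}\Z)\x$. This is exactly the variational characterization of the operator $2$-norm, so the inner optimum equals $\|\A(k)^{-1/2}\Z\|_2$, attained by taking $\tilde\d=\bar\d$ and $\x=\bar\x$ to be the leading left and right singular vectors of $\A(k)^{-1/2}\Z$. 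The outer maximization then selects $k^\star$ as in \eqref{eq:k_max}, and undoing the change of variables yields $\d^\star=\A(k^\star)^{-1/2}\bar\d$ and $\x^\star=\bar\x$, matching the claimed formula.

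The main obstacle I anticipate is not the SVD reduction itself, which is the routine endpoint, but rather the two bookkeeping steps: keeping the factor of $\tfrac12$ straight when passing between the combined constraint $\|\x\|^2+\d^\top\A(k)\d\le 2$ and the decoupled constraints $\|\x\|^2\le 1$, $\d^\top\A(k)\d\le 1$, and carefully invoking the spectral bounds on $-\L$ together with the convexity in $k^2$ to certify that the restriction $k^\star\in[0,2]$ really is lossless.
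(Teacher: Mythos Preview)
Your proposal is correct and follows essentially the same route as the paper's proof: both pull the minimization over $k$ out of the constraint into the outer maximization, reduce the ellipsoidal constraint on $\d$ to a sphere via the substitution $\tilde\d=\A(k)^{1/2}\d$, recognize the resulting inner problem as the variational form of the spectral norm, and bound $k^2\in[0,4]$ via the Rayleigh quotient $-\d^\top\L\d/\|\d\|^2$. The only difference is that where the paper simply invokes ``bilinearity between $\d$ and $\x$'' to pass directly to the decoupled constraints $\d^\top\A(k)\d\le 1$, $\|\x\|^2\le 1$, you spell this out via the explicit rescaling $\alpha\beta\le\tfrac12(\alpha^2+\beta^2)$, which is exactly the argument underlying the paper's shortcut and handles the factor of $\tfrac12$ in $\bar\theta$ cleanly.
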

%
%
%

The proof of Theorem 1 is detailed in supplementary material~\ref{ap:polar_problem}. The result implies that we can solve the polar by performing a one-dimensional line search over $k$. Due to the fact that the largest singular value of a matrix is a Lipschitz continuous function, this line search can be solved efficiently by a variety of global optimization algorithms. In particular, the above line search over $k$ is Lipschitz continuous (with respect to $k^2$) with a Lipschitz constant, $L_{k}$, which is bounded by
\begin{align}
L_{k} \! &\leq \! \begin{Bmatrix} \frac{2}{3 \sqrt{3}} \sqrt{\gamma} \| \Y - \D \X^\top \|_2 & \gamma \geq \frac{1}{32} \\ 
4 \gamma (1 + 16 \gamma)^{-\tfrac{3}{2}} \| \Y - \D \X^\top \|_2 & \gamma < \frac{1}{32} \end{Bmatrix}  
\nonumber \\ 
&\qquad\quad \leq \tfrac{2}{3 \sqrt{3}} \sqrt{\gamma} \|\Y - \D \X^\top\|_2 \!
\end{align}
This is formally stated and proven proven in supplementary material~\ref{ap:finite_computations} as Theorem~\ref{thm:Lip_line_search}, where we also give a formal result, in Corollary~\ref{cor:line_search}, which guarantees that the line search over $k$ (and hence the overall polar problem) can be solved in polynomial time. Building from this key result, next we discuss an algorithm to provably solve our wave-informed factorization model in polynomial time. 

%

\subsection{Wave-Informed Factorization Algorithm}
\label{subsection:WIMF}
Algorithm~\ref{alg:meta} defines our wave-informed matrix factorization algorithm. The algorithm has three components that are iterated: (1) perform gradient descent with a fixed number of modes/columns, $N$, to reach a first order stationary point for $\D$, $\X$, and $k$,  (2) solve the polar problem via Theorem \ref{thm:polar} to obtain $\d^*$, $\x^*$, and $k^*$, and (3) if the stopping condition is not met, update $\D$, $\X$, and $k$ by appending the polar solution $(\d^*,\x^*, k^*)$ to the solution (scaled by an optimal step size $\tau$); otherwise, terminate the algorithm.

\begin{algorithm}
\caption{\bf{Wave-Informed Matrix Factorization}}
\label{alg:meta}
\begin{algorithmic}[1]
\STATE Input $\mathbf{D}_{init}$, $\mathbf{X}_{init}$, $\mathbf{k}_{init}$, stopping tolerance: $\epsilon$ 
\STATE Initialize $ \left( \mathbf{D}, \mathbf{X}, \mathbf{k} \right) \leftarrow \left( \mathbf{D}_{init}, \mathbf{X}_{init}, \mathbf{k}_{init} \right)$ 
\STATE Initialize $ \left( \tilde{\mathbf{D}}, \tilde{\mathbf{X}}, \tilde{\mathbf{k}} \right) \leftarrow \left( \mathbf{D}_{init}, \mathbf{X}_{init}, \mathbf{k}_{init} \right)$
\WHILE {$\left| \ \Omega^\circ_\theta(\tfrac{1}{\lambda}(\Y-\widetilde \D \widetilde \X^\top)) - 1 \ \right| \geq \epsilon$}
\STATE Perform gradient descent on \eqref{eq:main_obj} to obtain $(\widetilde{\mathbf{D}}, \widetilde{\mathbf{X}}, \widetilde{\mathbf{k}})$\label{step:grad_desc}
\STATE Calculate $\Omega^\circ_\theta(\tfrac{1}{\lambda}(\Y-\widetilde \D \widetilde \X^\top))$ and obtain $\mathbf{d}^*, \mathbf{x}^*, k^*$
\IF {$ \Omega^\circ_\theta(\tfrac{1}{\lambda}(\Y-\widetilde \D \widetilde \X^\top)) - 1  > \epsilon$} 
\STATE $ \left( \mathbf{D}, \mathbf{X}, \mathbf{k} \right) \! \leftarrow \! \left( \left[ \widetilde{\mathbf{D}}, \ \tau \mathbf{d}^*  \right] \!, \! \left[ \widetilde{\mathbf{X}}, \ \tau \mathbf{x}^*  \right] \! , \! \left[ \widetilde{\mathbf{k}}^\top, \ k^*  \right]^\top
\right)\!$
\ENDIF
\ENDWHILE
\end{algorithmic}\
\label{algoblock:meta-algo}
\end{algorithm}
\

\subsubsection{Gradient Descent Update} 
We begin our algorithm by performing block coordinate descent 
with respect to $\D$ and $\X$ along with a block minimization step with respect to $k$ on the objective in \eqref{eqn:obj_final}, which we note is guaranteed to reach a first-order stationary point  \cite{xu2013block}. The number of columns of $\D$ and $\X$ (denoted by $N$ in \eqref{eq:main_obj}) is fixed and does not change during this step. Specifically, we use the following update equations (for a step size $\alpha$)
\begin{align}
    \d_i^+ &= \d_i  - \alpha \left(  \left(\D \X^{\top} - \Y \right) \x_i  +  \lambda \d_i \right. \nonumber \\ &\qquad\quad  \left. + 2 \gamma \lambda \left( \L + k_i^2 \I  \right)^2 \d_i \right) \label{eq:grad_D} \\
    \x_i^+ &= \x_i - \alpha \left( \left( \D^+ \X^{\top} - \Y \right)^{\top} \d_i^+ + \lambda \x_i \right) \label{eq:grad_X} \\
    k^+_i &= \sqrt{-\frac{ (\d^+_i)^\top \L \d^+_i } {\|\d^+_i\|_2^2}} \qquad  \textrm{for each $i=1,\ldots,N$} \label{eq:grad_k}
\end{align}
where $\D^+$, $\X^+$, and $k_i^+$ are the updated values and the $i$ subscript denotes a column of the respective matrix. 



\subsubsection{Solve Polar Problem}  
After finding a first order stationary point $(\widetilde{\D},\widetilde{\X},\widetilde{\k}$, we next solve the polar problem according to Theorem \ref{thm:polar} with $\Z = \tfrac{1}{\lambda}(\Y - \widetilde{\D} \widetilde{\X}^\top)$. In particular, we first solve the one-dimensional line search for the optimal value of $k$ in \eqref{eq:k_max}, where this line search can be solved using many approaches.  For example, the LIPO algorithm given in \cite{malherbe2017global} can provably solve the line search in polynomial time. 
Once the optimal $k^*$ is obtained, the values of $\d^*$ and $\x^*$ are then computed directly from the eigen-decomposition of $\A(k^*)$ as described in Theorem \ref{thm:polar}. 

\subsubsection{Stopping Condition and Growing the Factorization}  {After solving the polar problem as described above, the value of the polar gives a test for checking global optimality, and if this test is not met then we can escape the first order stationary point by appending the polar solution to our factorization.  Specifically, once we evaluate the polar $\Omega^\circ_\theta(\tfrac{1}{\lambda}(\Y-\widetilde \D \widetilde \X^\top))$, then in \cite{haeffele2019structured} (Prop. 4) it is shown that the distance to the global optimum (in objective value) is directly proportional to the value of the polar value minus 1 (and that the value of the polar is always $\geq 1$), so choosing to stop the algorithm when the value of polar takes a value $\leq 1 + \epsilon$ also guarantees optimality to within $\mathcal{O}(\epsilon)$.  If the value of the polar does not meet our stopping condition, then we continue the optimization by appending the polar solution to our factorization after an optimal scaling.}  Specifically, let
\begin{equation}
    \label{eqn: updates}
    \left(\D_\tau, \X_\tau, \k_{\tau} \right) \! = \! \left( \left[ \widetilde{\mathbf{D}} , \ \tau \mathbf{d}^*  \right] \! , \! \left[ \widetilde{\mathbf{X}}, \ \tau \mathbf{x}^*  \right] \! , \! \left[ \widetilde{\mathbf{k}}^\top, \ k^*  \right]^\top \!
\right)
\end{equation}
be our optimization variables with the new polar solution appended to the factorization scaled by $\tau$, then we find the optimal value of $\tau$ to minimize the original objective function:
%
\begin{align}
 \label{eqn:for_tau}
    &\min_{\substack{\tau}} \! \tfrac{1}{2} \|\Y-\D_{\tau}\X_{\tau}^\top\|_F^2 \! + \! \tfrac{\lambda}{2} \! \|\X_{\tau}\|_F^2 \! + \! \tfrac{\lambda}{2} \! \sum_{i=1}^N \D_{\tau}^\top \A(k_{i,\tau}) \D_{\tau} 
\end{align}
It is shown in supplementary material~\ref{appendix:optimal_step_size} that the optimal $\tau$ is given by
\begin{eqnarray}
    \tau = \frac{\sqrt{(\d^*)^\top \left( \Y - \widetilde{\D} \widetilde{\X}^{\top} \right) \x^* - \lambda }}{\|\d^*\|_2 \|\x^*\|_2 }
    \label{eq:for_tau_}
\end{eqnarray}
After updating the variables as in \eqref{eqn:for_tau} with the optimal choice of $\tau$, we then return to gradient descent updates until reaching a first order stationary point. 

{
Taken together, our proposed algorithm results in a polynomial-time algorithm for solving our main objective} \eqref{eq:main_obj}.  {We do not give a formal result as the algorithm given in Algorithm} \ref{alg:meta} 
{is simply one choice of optimization algorithm which is possible due to the polynomial time solution for the polar problem that we give in Theorem} \ref{thm:polar}
{and the overall convergence of the algorithm is largely known from prior work, provided the polar problem can be solved.  For example, the authors of} \cite{xu2013block} 
{prove that the block coordinate gradient descent portion of the algorithm will converge to a stationary point at a rate of $\mathcal{O}(1/t)$ (with $t$ being the number of iterations), which can be improved to $\mathcal{O}(1/t^2)$ with Nesterov acceleration.  Likewise, the authors of}  \cite{bach2013convex} 
{ show that a Frank-Wolfe/conjugate-gradient step (i.e., when we escape poor stationary points by solving the polar problem) will reduce the objective with a linear convergence rate.  From this it is straight-forward to combine these arguments to give an overall guarantee of convergence, but we do not do so here to limit the scope and length of the manuscript.}
%



\subsection{Signal Processing Interpretation of the Regularizer}
\label{sec:sig_procc}
{We now discuss an interesting interpretation of the above algorithm from a signal processing perspective. Specifically, if we let $\L = \Gamma \Lambda \Gamma^\top$ denote a singular value decomposition of $\L$, then note that} when identifying the optimal $k$ value in the polar program, we solve for
\begin{equation}
\label{eq:k_opt}
\argmax_{k \in [0,2]} \| \Gamma (\I + \gamma ( k^2 \I +  \Lambda)^2)^{-1/2} \Gamma^\top \Z \|_2 \; .
\end{equation}
This optimization has an intuitive interpretation from signal processing. Given that $\Gamma$ contains the eigenvectors of a Toeplitz matrix, those eigenvectors have spectral qualities similar to the discrete Fourier transform (the eigenvectors of the related circulant matrix would be the discrete Fourier transform \cite{DBLP:books/lib/OppenheimS75, DBLP:journals/siamrev/Strang89}). As a result, $\Gamma^\top$ transforms the data $\Z$ into a spectral-like domain and $\Gamma$ returns the data back to the original domain. Since the other terms are all diagonal matrices, they represent element-wise multiplication across the data in the spectral domain. This is equivalent to a filtering operation, with filter coefficients given by the diagonal entries of $(\I + \gamma ( \bar k \I +  \Lambda)^2)^{-1/2}$. Figure~\ref{fig:the_filter} shows two such filter responses at $k^2=2.5, \gamma = 1000$ and $k^2 = 1.5, \gamma = 10000$. Observe that, an increased value of $\gamma$ has reduced the bandwidth of the filter.

\begin{figure}[th]
    \centering
    \includegraphics[scale=0.35]{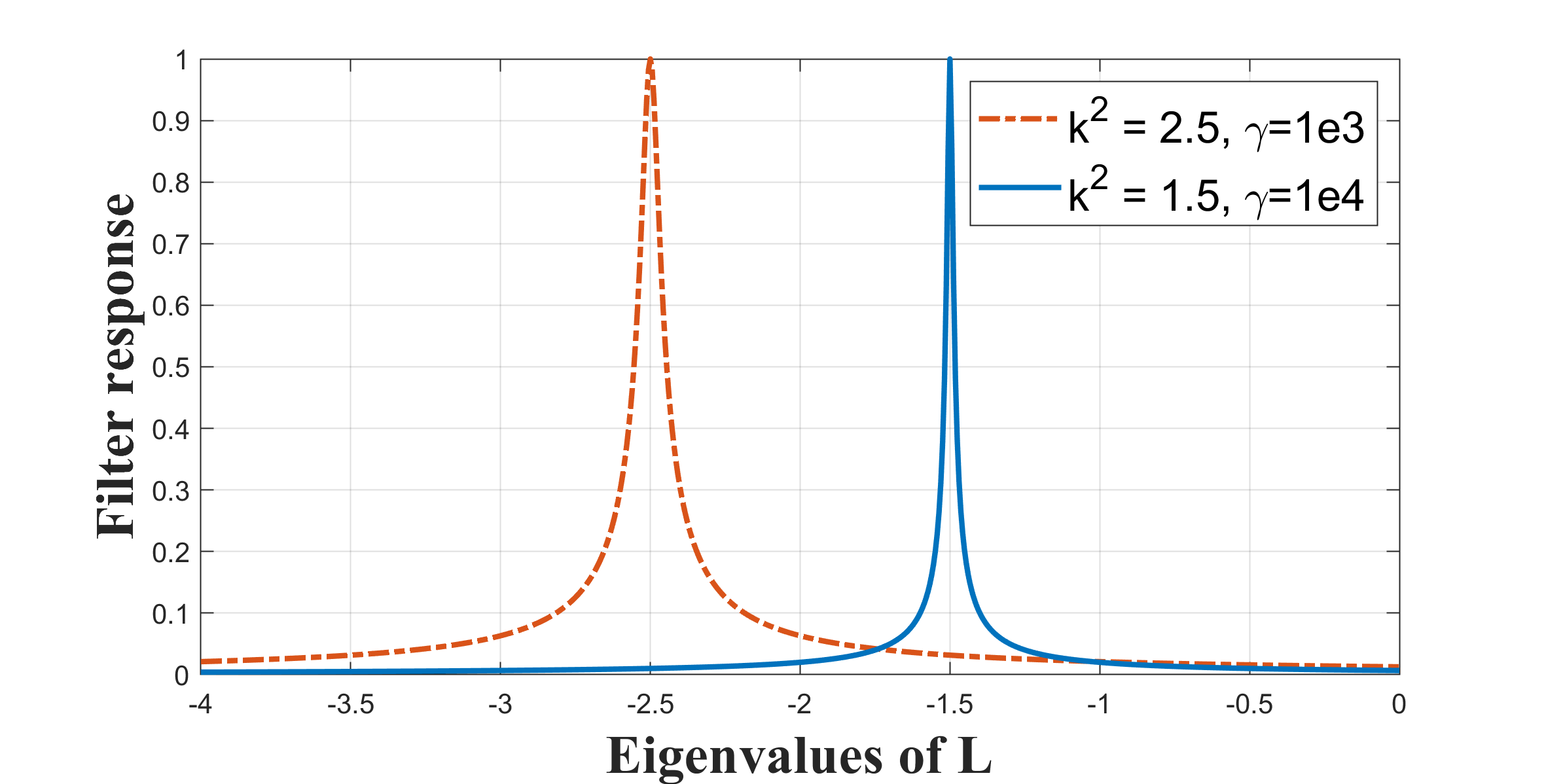}
    \caption{Butterworth Filter on the spectral-like domain}
    \label{fig:the_filter}
\end{figure}

Furthermore, recall that the transfer function of a 1st-order Butterworth filter is given by:
\begin{equation}
T(\omega) = \frac{1}{\sqrt{1+\gamma (\omega_0 + \omega)^2}}
\end{equation}
where $\omega_0$ is the center frequency of the passband of the filter and $1/\sqrt{\gamma}$ corresponds to the filter's $-3$dB cut-off frequency.  Comparing this to the filter coefficients from \eqref{eq:k_opt}, we note that the filter coefficients are identical to those of the 1st-order band-pass Butterworth filter, where $\Lambda$ corresponds to the angular frequencies. 

As a result, we can consider this optimization as determining the optimal filter center frequency $\omega_0=k^2$ with fixed bandwidth $(1/\sqrt{\gamma})$  that retains the maximum amount of signal power from $\Z$.  Likewise, the choice of the $\gamma$ hyperparameter sets the bandwidth of the filter. As $\gamma \rightarrow \infty$, the filter bandwidth approaches 0 and thereby restricts us to a single-frequency (i.e., Fourier) solution. 
Furthermore, we can provide a recommended lower bound for $\gamma$ according to $\gamma > 1/k_{bw}^2$, where $k_{bw}$ is the bandwidth of the signal within this spectral-like domain. {We would like to emphasize that this work introduces a unique theme within the works of scientific machine learning. Through means of wave-informed learning, we obtain a new method to mathematically derive the well-known \emph{digital Butterworth filter}}.

\section{Experimental Evaluation}
\label{sec:data}


\begin{figure*}[bt]
    \centering
    \includegraphics[scale=0.57]{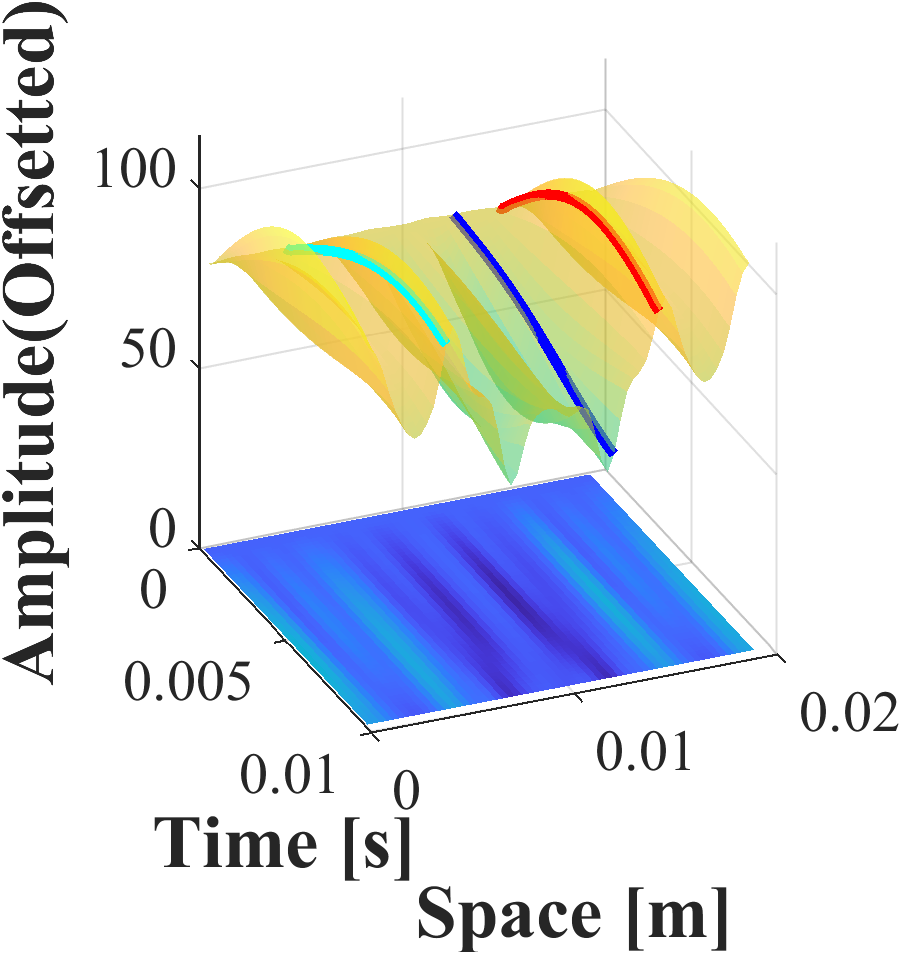}
    \includegraphics[scale=0.57]{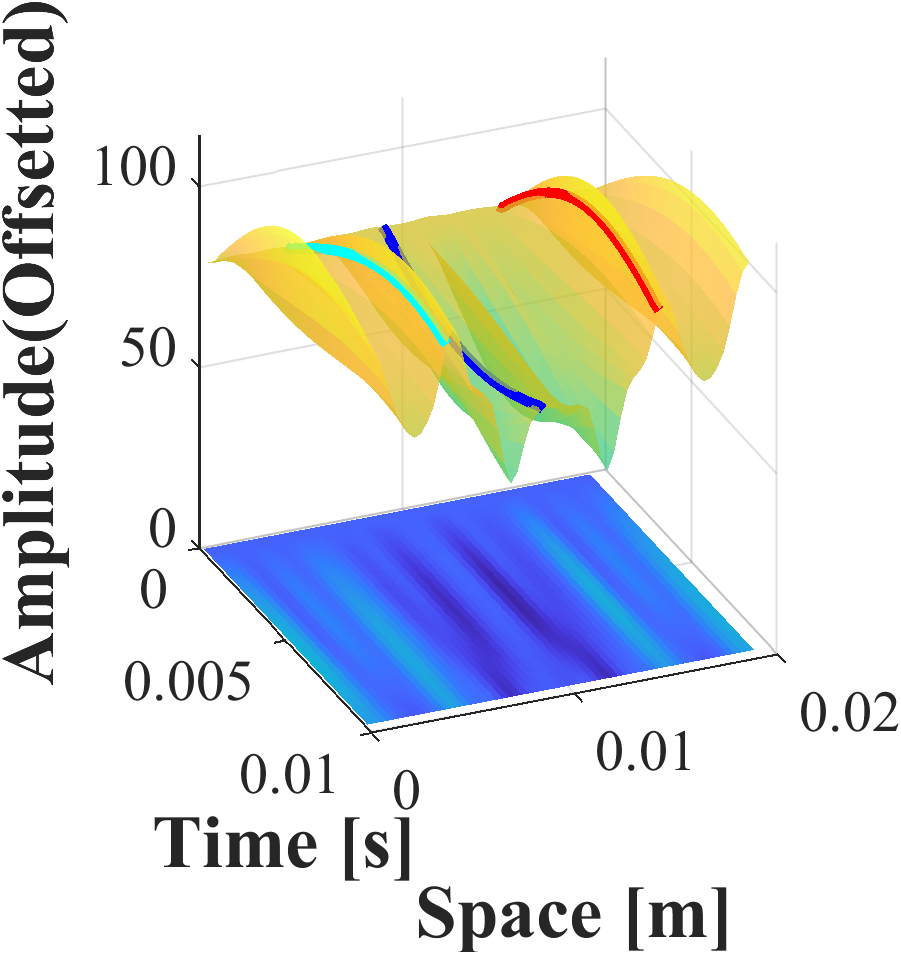}
    \includegraphics[scale=0.57]{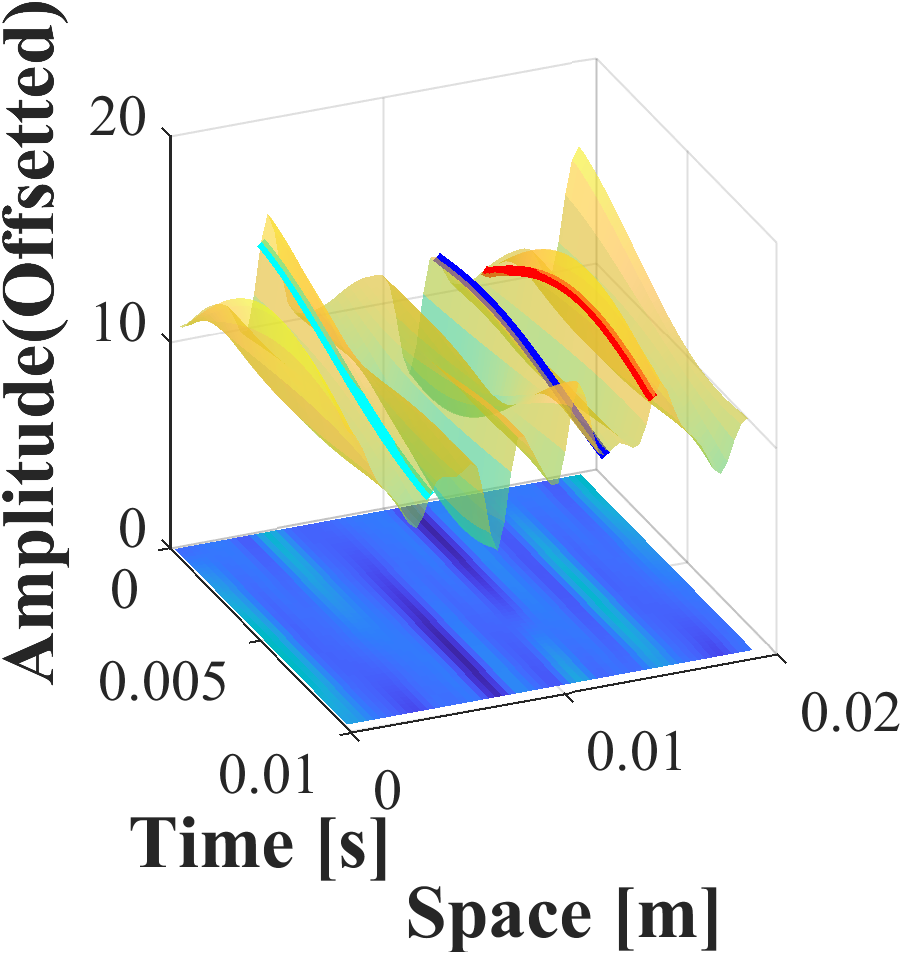}
    \includegraphics[scale=0.57]{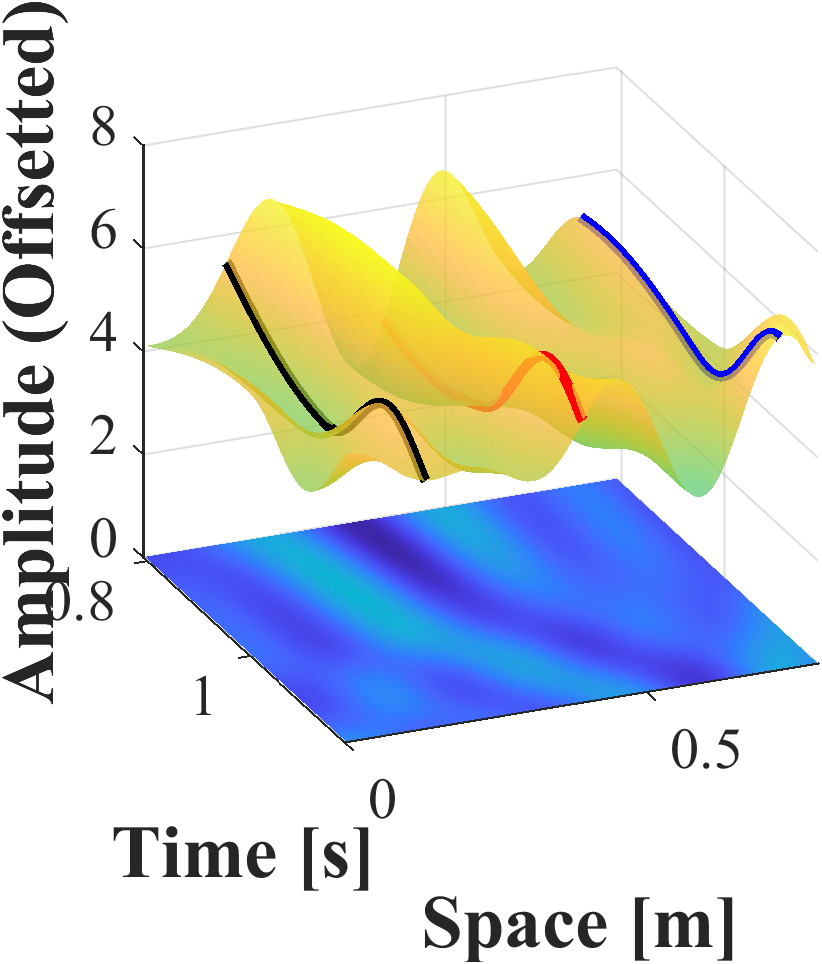}\\
    (a) \hspace{1.5in}
    (b) \hspace{1.5in}
    (c) \hspace{1.5in}
    (d)
    
    \caption{(a) Homogeneously vibrating medium marked with different colors at three different time instances for a short spatial interval at the start. (b) Non-Homogeneously (exponentially decaying over space) vibration data marked with different colors at three different time instances for a short spatial interval at the start. (c) Traveling wave data marked with different colors at three different time instances for a short spatial interval at the start. (d) Data of two mediums joined end-to-end and vibrating together marked with different colors at different time instances for a short spatio-temporal interval around the joint.}
    \label{fig:the_data}
\end{figure*}

We demonstrate the results from wave-informed matrix factorization on four simulated vibration datasets, each characterized by increasing complexity: (1) a homogeneous vibration with fixed boundary conditions, (2) an non-homogeneous vibration with fixed boundary conditions and spatial amplitude decay, (3) a plane wave traveling with temporal decay, and (4) a vibration in a segmented medium, where each segment has an independent wavenumbers and amplitude. In all four datasets, one dimension of the matrix represents space and the other dimension represents time, as is common in modal analysis \cite{osti_1487358, YANG2017567}. Multiple modes share the same spatial locations. Fig.~\ref{fig:the_data} illustrates these four datasets.

For each method and dataset, we study five different signal-to-noise ratio (SNR) levels. Each signal is defined by 
\begin{align}
    \widehat{y}(\ell,t) = y(\ell,t) + \eta(\ell,t) \; ,
\end{align}
where $y(\ell,t)$ represents our data and $\eta(\ell,t)$ represents our noise. 
The noise is assumed to be additive white Gaussian noise across both space and time. The SNR is defined by 
\begin{align}
    \textrm{SNR} = \frac{\sum_{\ell,t} |y(\ell,t)|^2 }{\sum_{\ell,t} |\eta(\ell,t)|^2} \; .
\end{align}
The SNR is assumed to be unknown to each of the algorithms. 

The following subsections describe how each of our datasets are computed. Table~\ref{table:params_selected} lists the parameters for each of these scenarios. Note that each case can have potentially an infinite number of modes $N$. In practice, these modes usually attenuate as the associated frequency / wavenumber increases. To simplify this, we simulate a finite number of modes for each dataset. For the segmented data, $k_n$ represents the first $8$ theoretical modes (four in each segment) for vibration across two connected segments. 

To solve the polar problem and estimate $k^*$ in Algorithm~\ref{algoblock:meta-algo}, many approaches can be taken. In this paper, we solve for $k^*$ by first coarsely searching for a maximum among coarsely uniformly placed points. For a predetermined region around the maximum, we perform a fine search by taking the maximum over finely uniformly placed points in that region


\begin{table*}[t]
\caption{\label{table:params_selected} Table of parameters used to generate data, with additive noise $\eta_n \sim \mathcal{N}(0,1)$ and $u_n \sim \mathcal{U}[0,1]$. The random variables are independently sampled for each parameter and $1 \leq n \leq N$.}

\centering
\begin{tabular}{lllllllllll}
\hline
Data    & $(\Delta \ell, L)$ & $(\Delta t,T)$ & $\alpha_n$ & $\beta_n$     & $k_n$      & $\omega_n$ &  N & $\delta$ \\
\hline
Homogeneous Vibration & $(0.0901,1)$   & $(0.0005,2)$      & $\frac{n}{2}$    & $0$ & $ n \pi$ & $106k_n$  & 6 & 100 \\
Non-homogeneous Vibration & $(0.901,10)$   & $(0.0005,2)$           & 0   & $\frac{4+n}{10} + u_n$ & $ n \pi$ & $106k_n$  &  6 & 1 \\
Travelling Plane Wave & $(0.901,10)$   & $(0.0005,2)$     &  $\frac{n}{2}$   & 0 & $ n \pi$& $106k_n$  &  6  &  1 \\
Segmented Vibration & $(0.01,2)$ & $(0.02,10)$ & - & - & $2\pi \pm \arctan \left( \sqrt{ \frac{1}{3} \left( 13 \pm 4\sqrt{10} \right)  } \right)$  & $3k_n$  & $8$ &  1 \\%
\hline
\end{tabular}
\end{table*}

\subsection{Homogeneous Vibration Data with Fixed Boundaries}
Our first dataset emulates vibrations in a medium with fixed (Dirichlet) boundary conditions such that $y(0,t) = 0$ and $y(L,t)=0$, where $L$ is the length of the medium. The sinusoids reflect from each end and create a standing wave in space. Such data is defined by
\begin{eqnarray}
y(\ell,t)\!=\!\sum_{n=1}^{N} e^{-\alpha_n t} \sin (n k \ell) \left( a_n\sin(n \omega t )\!+\!b_n\cos(n \omega t) \right) \, .
\label{eqn:homogenous_vibration}
\end{eqnarray}
This type of data is commonly found in vibrations and modal analysis problems \cite{Ma2019PhysicalIO, lai2020full}. For example, the modes and their parameters can be used to determine  material or structural characteristics \cite{articleKammer, articleCunha, Marshall1985ModalAO}. Apart from the added noise, we assume a damping over time to introduce a non-ideality. For this illustration we choose $L=1$.

\subsection{Inhomogeneous Vibration Data With Spatial Decay}

The previous dataset assumed spatial homogeneity. That is, the behavior of the medium (i.e., the wavenumber and amplitude) did not vary across space $\ell$. This assumption may not always be true. As a result, we would like to extract modes with inhomogenieties. We consider the same fixed boundary data but with a spatial decay such that
\begin{eqnarray}
y(\ell,t) = \sum_{n=1}^{N}  e^{- \beta_n \ell}  \sin ( k_n \ell) \left( a_n \sin(n \omega t) + b_n \cos(n \omega t) \right)  \,.
\label{eqn:non-homogenous_vibrations}
\end{eqnarray}
In this expression, we have introduced a new parameter $\beta_n$ that causes the amplitude to decay over space, similar to an evanescent wave in electro-magnetics and is usually meant to persist over longer regions of space. This example is meant to explore if each algorithm addresses amplitude non-homogeneities. In contrast to the previous case, where we considered measuring vibrations along a string of length 1, we consider a scenario of measuring waves for a longer length. In this illustration, we fix a length of 10 for measuring waves decaying exponentially over space. As an illustration for the advantage of length, the spectrum (computed by multiplying data with the transpose of the eigenvectors $\Gamma$) of three exponentially decaying sinusoids (for length 1 and length 10) corresponding to $k_1 = \pi$, $k_2 = 2\pi$, and $k_3 = 3 \pi$ are shown in Figure~\ref{fig:decaying_spectra}(a) and Figure~\ref{fig:decaying_spectra}(b), respectively. In each plot, the peaks correspond to the same three values of $k$, but due to the 10 times increases in length, Figure~\ref{fig:decaying_spectra}(b) exhibits a 10 times improvement in resolution in the wavenumber domain. In the presence of spectral spread (e.g., caused by a spatial decay), this can improve our ability to distinguish each mode.  The parameters generating the data are given in Table~\ref{table:params_selected}. 


\begin{figure*}
    \centering
    \includegraphics{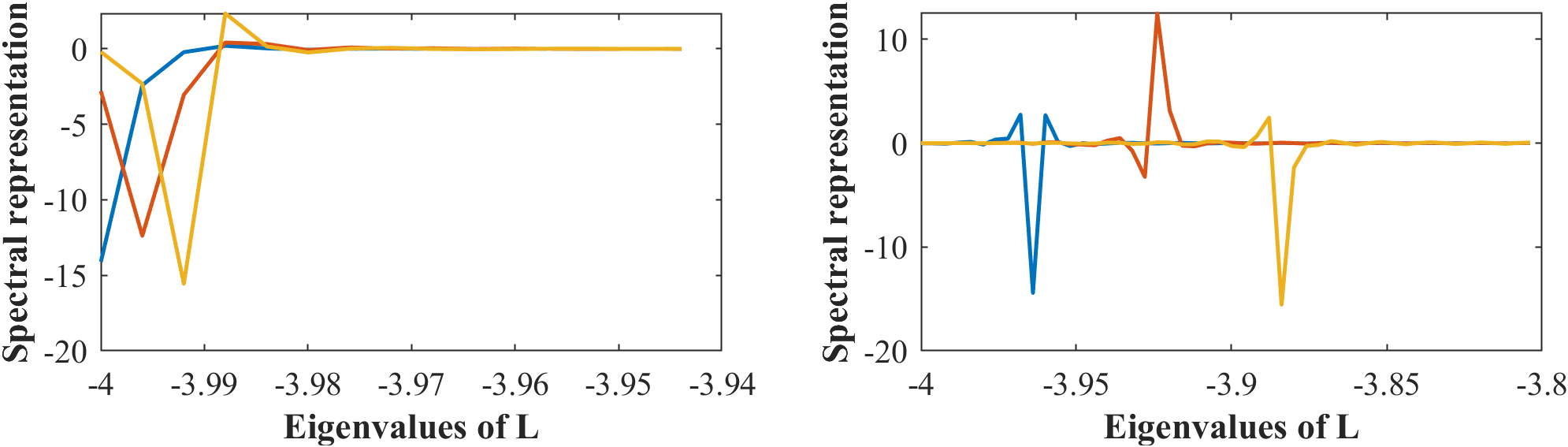}
    ~\\~(a) \hspace{24em} (b)
    \caption{ Spectra (with respect to Eigen-basis of $\L$) of decaying exponentials $e^{-\beta_1 \ell} \sin (\pi \ell)$, $e^{-\beta_2 \ell} \sin (2\pi \ell)$ and $e^{-\beta_3} \sin(3\pi \ell)$ (indicated using blue, red and yellow colors respectively) for (a) $ \ell \in [0,1]$, since here L=1 (b) $\ell \in [0,10]$, since here L=10} 
    \label{fig:decaying_spectra}
\end{figure*}

\subsection{Travelling Plane Wave Data}
We previously considered vibration-like modes with fixed boundaries. This dataset considers sinusoids of uniformly spaced wavenumbers travelling in time and space without explicit boundary conditions, defined by
\begin{equation}
y(\ell,t) = \sum_{n=1}^{N}  \, e^{-\alpha_n t} \sin (k_n \ell + \omega_n t) \; .
\label{eqn:plane_waves}
\end{equation}
This expression resembles a travelling plane wave rather than a vibrational wave. These types of data are commonly found in beamforming problems for RADAR \cite{1449208,haykin1985radar}, ultrasound \cite{jensen1996estimation,YU201643}, audio processing \cite{huang2006acoustic} and other fields using spatial signal processing. In beamforming, the parameter $k_n$ is usually proportional to the direction of arrival. In ultrasound and geophysics, $k_n$ may represent a transverse or longitudinal modes of vibration \cite{jones_1986}. We also have the same assumption of measuring the traveling wave data for a longer length (here 10 times the homogeneous case). Note that time and space are not immediately separable in this equation, as observed in the other datasets. However, \eqref{eqn:plane_waves} can be expressed as 
\begin{align}
    y(\ell,t) &= \sum_{n=1}^{N}  \, e^{-\alpha_n t} \sin (k_n \ell -\phi_n + \phi_n + \omega_n t) \nonumber \\
    &= \sum_{n=1}^{N} \, e^{-\alpha_n t} \left[ \sin (k_n \ell -\phi_n) \cos( \omega_n t + \phi_n) \right. \nonumber \\
    &\qquad\qquad\qquad + \left. \cos (k_n \ell -\phi_n) \sin(\omega_n t + \phi_n ) \right]
    \label{eqn:traveling_decomposed}
\end{align}
for an any arbitrarily chosen $\phi_n$. Hence, a travelling plane wave mode can be expressed as the sum of two standing wave modes, similar to our second dataset, with 90 degree phase differences. In our results, we assume $\phi_n = 0$ so that errors in Table~\ref{table:MSE_dicts} will be small when the extracted components match $\phi_n = 0$ while errors in Table~\ref{tbl:freq_results} will be invariant to $\phi_n$ since the magnitude of the Fourier transform ignores phase delay.

\subsection{Multi-Segment Data}

Our final dataset considers a problem with both amplitude and wavenumber spatial inhomogeneities. This occurs when there are media adjacent to each other. As a signal travels from one medium to the next, the amplitude varies according the the interface's reflection and transmission coefficients and the wavenumber varies according to Snell's law. We generate vibration data in two media connected end-to-end that is continuous and once differentiable at the point where the two different they connect (see \cite{jones_1986} for related theory). We assume the two mediums to have a density ratio of 9 to 1, respectively. The velocity of the wave traveling in the string is inversely proportional to the square root of density, thus the velocities must be in the ratio 3 to 1. The mathematical expression generating the data is:
\begin{eqnarray}
    \label{eqn:segmented_data}
    y(\ell,t) = \begin{cases}
    \sum_{i=1}^{N} \sin(3 k_i \ell) \sin(\omega_i (t/c) ), & 0 \leq \ell < 1 \\
    \sum_{i=1}^{N} \sin( k_i \ell ) (\ell)\sin(\omega_i (t/3c) ), & 1 < \ell \leq 2  \; .
    \end{cases}
\end{eqnarray}
where $c$ is the velocity of propagation in the first medium. The wavenumbers for this problem take a specific solution based on the condition, shown in the Table~\ref{table:params_selected}, assuming fixed, zero-valued boundaries at $\ell = 0$ and $\ell = 2$ and the continuity condition at $\ell = 1$. 


\subsection{Choice of Regularization Parameters}

To provide a fair and informative comparison between each method,
we assume oracle knowledge of $N$ throughout our results. As previously mentioned, deviation from $1$ in the polar problem in wave informed wave factorization can be used as a metric for monitoring convergence and as a stopping condition for the algorithm (and by extension the final choice of $N$), but the appropriate threshold is unclear in the presence of noise and a similar convergence property does not exist for all of our comparison methods. While there are a number of statistical methods for estimating $N$ based on singular value decomposition that could be applied to every method \cite{akaike1998information, Liao2018AnES}, we want our performance to reflect choice of algorithms rather than errors in the estimation of $N$. Hence using oracle knowledge of $N$ to stop the algorithm as appropriate.


For wave-informed matrix factorization, we select $\lambda$ to be $3/4 \sigma(N)$, where $\sigma(N)$ is the N-th singular value of our data matrix $\Y$. This choice of $\lambda$ is intuitively chosen based on the interpretation of low rank factorization as a soft thresholding operation over the singular values \cite{haeffele2014structured, haeffele2019structured}. The $3/4$ factor was obtained empirically based on our examples. However, reducing the factor is reasonable given the additional constraint provided by the Helmholtz equation.

We select $\gamma$ in wave-informed matrix factorization based on our intuition of $1/\sqrt{\gamma}$ being bandwidth of a filter. In general, the bandwidth of exponentially decaying sinusoids discussed here is  small, so $\gamma$ is expected to be relatively large. Since our modes are constrained to a finite length in space, we know the bandwidth of a pure sinusoid will be one sample of the discrete Fourier transform. Therefore, we choose our bandwidth based on this assumption so that
\begin{align}
\label{eqn:gamma_choice}
    \gamma =  \delta \left(\frac{M}{\pi} \right)^2, \;
\end{align}
where $M$ is the number of samples in the spatial domain. While this bandwidth is slightly smaller than some of the data due to spatial variations, this difference should be corrected for in the gradient descent step of wave-informed matrix factorization. An analytical way to motivate the same would be to observe the filter coefficients.
\begin{align*}
    \cfrac{1}{\sqrt{1 + \gamma (k^2 + \boldsymbol{\Lambda}_{ii})^2}}
\end{align*}
is the $i$-th filter coefficient. Assuming pure Dirichlet boundary conditions, we have from \cite{chung2000discrete} that,\begin{align*}
    \boldsymbol{\Lambda}_{ii} = -4 \sin^2 \left( \cfrac{\pi i}{2(M+1)} \right)
\end{align*}
For $M$ large enough compared to $i$, we have that,\begin{align*}
    \boldsymbol{\Lambda}_{ii} = -4 \sin^2 \left( \cfrac{\pi i}{2(M+1)} \right) \approx -\cfrac{\pi^2 i^2}{(M+1)^2}
\end{align*}
Again, for large enough $M$, we can approximate the filter coefficients as below by substituting $\gamma$ from \eqref{eqn:gamma_choice},
\begin{align*}
     \cfrac{1}{\sqrt{1 + \delta \cfrac{M^2}{\pi^2} \left(k^2 -\cfrac{\pi^2 i^2}{(M+1)^2} \right)^2}} \approx \cfrac{1}{\sqrt{1 + \delta   \left(\cfrac{M^2 k^2}{\pi^2} -i^2 \right)^2}}
\end{align*}
where $i$ represents the actual (not angular) wavenumber scale. For instance, when $\delta = 1$, the bandpass Butterworth filter has a -3 dB cut-off wavenumber of 1 m$^{-1}$ in the actual wavenumber scale. For larger values of $\delta$, the -3dB cut-off wavenumber reduces to $1/\delta$ m$^{-1}$ and is useful to pick isolated wavenumbers. The value of $\delta$ used for each case is mentioned in Table~\ref{table:params_selected}. Observe that homogeneous vibrations requires a larger value of $\delta (=100)$  to pick isolated wavenumbers whereas others still work with a value of $\delta=1$.

\section{Results \& Discussions}
\label{sec:results}
For each dataset, we compare our approach with five other non-parametric matrix factorization / blind source separation methods. The algorithms we compare with include independent component analysis (\Dc) \cite{COMON1994287}, dynamic mode decomposition (\E) \cite{tu2014dynamic}, multidimensional empirical mode decomposition (\F) \cite{wu2009ensemble,6144702,1510663}, principal component analysis (\G) \cite{allemang2011application}, and wave-informed K-singular value decomposition (\B) \cite{Tetali2019}. We choose these method because they are widely used in modal analysis throughout the literature or represent a precursor to wave-informed matrix factorization (\Ai). Each method places different assumptions on the extracted modes. Briefly, ICA extracts components components that are statistically independent, PCA extracts components that are orthogonal to each other, DMD extracts components with a set frequency and decay based on estimating a state transition matrix, EMD extracts components by iteratively interpolating between maxima and minima, and WIKSVD extracts components that assume spatial components satisfy the wave equation and are sparse in the frequency domain. By comparison, WIMF extracts spatial components that satisfy the wave equation and minimizes the number of extracted modes. Based on the filter interpretation of WIMF, it has characteristics similar to DMD since the filter's center frequency and bandwidth identifies the frequency and possibly decay of the extracted modes. 

\subsection{General Performance}
In Table~\ref{table:MSE_dicts}, we show the mean squared errors for each of the recovered modes obtained by each method, 
\begin{align}
   \textrm{MSE} = \frac{1}{N} \sum_{n=1}^N \left\| \frac{\d_n}{\| \d_n \|_2} - \frac{ c_n \d_n^{(\textrm{true})}}{\|  \d_n^{(\textrm{true})} \|_2} \right\|^2_2
   \label{eq:mse}
\end{align}
where $\d_n $ is extracted mode and $\d_n^{(\textrm{true})}$ is the true mode. In our analysis, $\d_n^{(\textrm{true})}$ is chosen (without replacement) to be the mode with the maximal correlation (in absolute value) with $\d_n$ and $c_n = \textrm{sign}(\d_n^{\top}\d^{\textrm{(true)}}_n)$. When computing the mean squared error, we normalize each mode since the amplitude is not uniquely determined in matrix decompositions.  

Table~\ref{tbl:freq_results} shows the mean squared error between the Fourier magnitudes true and extracted modes, using the same definition of \eqref{eq:mse} but replacing $\d_n$ and $\d_n^{(\textrm{true})}$ with the magnitude of their Fourier transforms, respectively. These results provide a different perspective, identifying if the methods obtain the correct frequencies ignoring the effect of phase shifts. 

Fig.~\ref{fig:vibrational_modes} further illustrates examples of two modes extracted by each method for each dataset at an SNR of $\infty$. Across all of the datasets, ICA has the overall poorest performance, followed by EMD, PCA, DMD, WIKSVD, and WIMF. We hypothesize ICA performs poorly since wave modes are not random signals, and while the modes are orthogonal, they are not statistically independent. EMD performs poorly since mixtures of sinusoids where the amplitudes and/or frequencies are too similar cannot be recovered by the algorithm \cite{rilling2007one}. DMD and WIKSVD performs most poorly in high noise scenarios and with spatially segmented wavenumbers. DMD performs poorly at high noise because the estimation of the state transition matrices does not consider noise. Though a similar wave-informed regularizer is introduced for WIKSVD, it does not extract the basis as well as WIMF we believe due to the lack of filtering behavior in WIKSVD. This filtering effect particularly improves WIMF over WIKSVD for high noise regimes. The following subsections study the performance with each dataset in greater detail.




\begin{table*}[t]
\setlength{\tabcolsep}{7.5pt}   
\sisetup{scientific-notation=fixed,tight-spacing=true,fixed-exponent=0,round-mode=places,round-precision=1,detect-all=true,table-auto-round,table-format=1.1,table-text-alignment=center}
\label{tbl:all_results}
\caption{\label{table:MSE_dicts} Mean squared errors (times $10^{3}$) between actual basis generating the data for each case (averaged over 20 Monte-Carlo experiments) and the matrix $\D$ obtained from the algorithms (\Ai), (\B), (\Dc), (\E), (\F) and (\G). Highlight values show the best performance.}
\centering
\begin{tabularx}{7in}{@{}l S[]S[] @{\hspace{3.75em}} S[]S[] @{\hspace{3.75em}} S[]S[] @{\hspace{3.75em}} S[]S[] @{\hspace{3.75em}} S[]S[] @{\hspace{3.75em}} S[]S[] @{\hspace{3.75em}} S[]S[]@{}}
\hline
\textbf{Data} & \multicolumn{2}{l}{\centering \textbf{SNR [dB]}} & \multicolumn{2}{l}{\centering \B}  & \multicolumn{2}{l}{\centering \Dc}     & \multicolumn{2}{l}{\centering \E}     & \multicolumn{2}{l}{\centering \F}   & \multicolumn{2}{l}{\G} & \multicolumn{2}{l}{\centering \Ai}    \\ 
\hline
Homogeneous  &-11.84 &  \pm0.38 & 0.77 &    \pm0.13 & 1826.96 &    \pm14.19 & 233.9 &    \pm18.05 & 1173.39 &    \pm159.52 & 61.78 &    \pm8.21  & {\cellcolor{yellow!50}} 0.69 & {\cellcolor{yellow!50}}    \pm0.45 \\ 
Homogeneous   & -5.85 & \pm0.55 & {\cellcolor{yellow!50}} 0.18 & {\cellcolor{yellow!50}} \pm0.06 & 1823.34 & \pm11.24 & 63.94 & \pm7.98 & 1132.57 & \pm130.47 & 13.17 & \pm2.1 & 0.21 & \pm0.08 \\
Homogeneous   & -1.93 & \pm0.31 & {\cellcolor{yellow!50}} 0.08 & {\cellcolor{yellow!50}} \pm0.01 & 1825.9 & \pm11.85 & 26.89 & \pm2.2 & 1206.98 & \pm151.27 & 5.25 & \pm0.65 & 0.14 & \pm0.1  \\
Homogeneous   & 2.28 & \pm0.36 & {\cellcolor{yellow!50}} 0.04& {\cellcolor{yellow!50}} \pm0.01 & 1824.56 & \pm10.09 & 10.5 & \pm1.06 & 1243 & \pm133.39 & 2.03 & \pm0.27 & 0.09 & \pm0.05  \\
Homogeneous   & $\infty$ & & {\cellcolor{yellow!50}} 0.018 & {\cellcolor{yellow!50}} \pm0.001 & 1155.081 & \pm96.941 & 0.172 & \pm0.022 & 1443.895 & \pm160.807 & 0.063 & \pm0.024 & 0.057 & \pm0.009 \\
Non-homogeneous   & -13.09& \pm0.59 & 7.56& \pm1.1 & 1917.6& \pm12.35 & 252.93& \pm26.97 & 1402.98& \pm181.83 & 94.83& \pm60.72 & {\cellcolor{yellow!50}} 3.91& {\cellcolor{yellow!50}} \pm0.57  \\
Non-homogeneous  & -9.41& \pm0.73 & {\cellcolor{yellow!50}} 2.99& {\cellcolor{yellow!50}} \pm0.38 & 1920.05& \pm6.36 & 133.43& \pm17.47 & 1414.42& \pm189.53 & 48.07& \pm48.87 &  3.39& \pm0.45 \\
Non-homogeneous  & -3.38& \pm0.68 & {\cellcolor{yellow!50}} 0.76& {\cellcolor{yellow!50}} \pm0.1 & 1922.28& \pm7.17 & 38.64& \pm5.61 & 1390.92& \pm138.97  & 12.54& \pm19 &  3.12& \pm0.36 \\
Non-homogeneous  &2.69& \pm0.56 &  {\cellcolor{yellow!50}} 0.21& {\cellcolor{yellow!50}}  \pm0.03 & 1922.94& \pm5.0 & 10.42& \pm1.75 & 1381.22& \pm137.57  & 12.59& \pm16.24  & 3.14& \pm0.45 \\
Non-homogeneous   & $\infty$ &   & 0.039& \pm0.005 & 1956.643& \pm67.1 &  {\cellcolor{yellow!50}} 0.000041& {\cellcolor{yellow!50}}  \pm  0.000006 & 1257.568& \pm162.273 & 25.135& \pm33.957 & 3.042& \pm0.37 \\
Travelling Wave  & -11.73& \pm0.627 & {\cellcolor{yellow!50}} 320.05& {\cellcolor{yellow!50}} \pm106.177 & 1913.03& \pm13.517 & 1170.23& \pm57.756 & 1415.61& \pm129.838 & 334.24& \pm82.329 & 1129.04& \pm235.417 \\ 
Travelling Wave  & -5.9& \pm0.543 & 225.47& \pm78.929 & 1927.56& \pm9.552 & 1108.11& \pm32.655 & 1346.61& \pm115.312 & 212.16& \pm89.363 & {\cellcolor{yellow!50}}40.24& {\cellcolor{yellow!50}} \pm1.464   \\
Travelling Wave  & -1.84& \pm0.565 & 199.87& \pm49.975 & 1930.35& \pm13.662 & 1084.06& \pm38.305 & 1267.77& \pm110.204 & 241.56& \pm67.66 & {\cellcolor{yellow!50}}38.29& {\cellcolor{yellow!50}} \pm0.563  \\
Travelling Wave  &1.83& \pm0.834 & 235.93& \pm52.036 & 1932.83& \pm18.775 & 1094.88& \pm29.441 & 1390.48& \pm163.535 & 280.82& \pm62.795 & {\cellcolor{yellow!50}}36.66& {\cellcolor{yellow!50}} \pm0.408 \\
Travelling Wave  &$\infty$ &  & 409.45& \pm13.473 & 1189.03& \pm43.279 & 1115.94& \pm1.251 & 1548.12& \pm75.384 & 342.49& \pm5.63 & {\cellcolor{yellow!50}}35.82& {\cellcolor{yellow!50}} \pm0.271    \\
Segmented & -11.56& \pm0.0004 & 1588.98& \pm38.1574 & 1792.13& \pm25.247  & 1515.05& \pm84.4314 & 1264.24& \pm116.8219 & 1526.52& \pm77.3674 & {\cellcolor{yellow!50}} 891.77& {\cellcolor{yellow!50}} \pm117.3127  \\
Segmented & -9.07& \pm0.0244  & 1594.08& \pm35.9512 & 1794.89& \pm29.2853 & 1532.5& \pm78.7935  & 1260.97& \pm131.089  & 1491.62& \pm60.1776 & {\cellcolor{yellow!50}} 913.24& {\cellcolor{yellow!50}} \pm117.485   \\
Segmented & -3.05& \pm0.0198  & 1638.14& \pm16.267  & 1792.46& \pm27.3304 & 1520.51& \pm68.7968 & 1191.45& \pm127.6302 & 1480.75& \pm34.9955 & {\cellcolor{yellow!50}} 767.29& {\cellcolor{yellow!50}} \pm20.297    \\
Segmented & 5.48& \pm0.0187   & 1664.01& \pm46.2336 & 1786.2& \pm28.6862  & 1483.33& \pm24.9771 & 1158.02& \pm133.9853 & 1470.5& \pm40.8251 & {\cellcolor{yellow!50}} 616.28& {\cellcolor{yellow!50}} \pm4.7015     \\
Segmented &    $\infty$ &              & 889.88& \pm0.001        & 1840& \pm12.7936    & 1006.11& \pm0.001       & 1019.05& \pm0.001      & 1516.11& \pm0.001   & {\cellcolor{yellow!50}} 852.02 &  {\cellcolor{yellow!50}} \pm0.1228       \\
\hline
\end{tabularx}

\vspace{2em}
\setlength{\tabcolsep}{7.5pt}   
\sisetup{scientific-notation=fixed,tight-spacing=true,fixed-exponent=0,round-mode=places,round-precision=1,detect-all=true,table-auto-round,table-format=1.1,table-text-alignment=center }
\caption{\label{tbl:freq_results} Mean squared errors (times $10^3$) between absolute value of Fourier transform of actual basis generating the data for each case (averaged over 20 Monte-Carlo experiments) and the matrix $\D$ obtained from the algorithms (\Ai), (\B), (\Dc), (\E), (\F) and (\G). Highlighted values show the best performance.}
\centering
\begin{tabularx}{7in}{@{}l S[]S[] @{\hspace{3.75em}} S[]S[] @{\hspace{3.75em}} S[]S[] @{\hspace{3.75em}} S[]S[] @{\hspace{3.75em}} S[]S[] @{\hspace{3.75em}} S[]S[] @{\hspace{3.75em}} S[]S[]@{}}
\hline
\textbf{Data} & \multicolumn{2}{l}{\centering \textbf{SNR [dB]}} & \multicolumn{2}{l}{\centering \B}  & \multicolumn{2}{l}{\centering \Dc}     & \multicolumn{2}{l}{\centering \E}     & \multicolumn{2}{l}{\centering \F}   & \multicolumn{2}{l}{\G} & \multicolumn{2}{l}{\centering \Ai}    \\ 
\hline
Homogeneous   & -11.84 & \pm    0.38 & 0.51 & \pm    0.12 & 1731.99 & \pm    23.89 & 229.51 & \pm    18.19 & 891.27 & \pm    151.17   & 59.92 & \pm    7.87  & {\cellcolor{yellow!50}} 0.23 & {\cellcolor{yellow!50}} \pm    0.14 \\ 
Homogeneous   &-5.85 & \pm    0.55 & 0.11 & \pm    0.05 & 1719.27 & \pm    21.25 & 62.51 & \pm    7.84 & 838.18 & \pm    150.21  & 12.6 & \pm    2.05  & {\cellcolor{yellow!50}} 0.06 & {\cellcolor{yellow!50}} \pm    0.03 \\
Homogeneous   & -1.93 & \pm    0.31 & 0.04 & \pm    0.01 & 1725.65 & \pm    24.87 & 26.12 & \pm    2.17 & 733.01 & \pm    104.9 & 4.88 & \pm    0.59  & {\cellcolor{yellow!50}} 0.04 & {\cellcolor{yellow!50}} \pm    0.02  \\
Homogeneous   &2.28 & \pm    0.36 & {\cellcolor{yellow!50}} 0.02 & {\cellcolor{yellow!50}} \pm    0.01 & 1723.94 & \pm    25.25 & 10.13 & \pm    1.02 & 768.8 & \pm    86.18  & 1.83 & \pm    0.21  & 0.03 & \pm    0.01 \\
Homogeneous   & $\infty$ & & 0.011 & \pm    0.001 & 782.744 & \pm    110.157 & 0.152 & \pm    0.019 & 1020.455 & \pm    111.294 & 0.016 & \pm    0.006 & {\cellcolor{yellow!50}} 0.017 & {\cellcolor{yellow!50}} \pm    0.002 \\
Non-homogeneous   & -13.09& \pm   0.59 & 4.25& \pm   0.63 & 1767.38& \pm   20.51 & 238.68& \pm   24.97 & 958.78& \pm   122.03   & 78.86& \pm   52.51 & {\cellcolor{yellow!50}} 1.8& {\cellcolor{yellow!50}} \pm   0.42 \\
Non-homogeneous  & -9.41& \pm   0.73 & {\cellcolor{yellow!50}} 1.5& {\cellcolor{yellow!50}} \pm   0.19 & 1771.97& \pm   15.62 & 125.96& \pm   16.54 & 944.35& \pm   135.65 & 38.37& \pm   39.95 & 1.47& \pm   0.31  \\
Non-homogeneous  & -3.38& \pm   0.68 & {\cellcolor{yellow!50}} 0.32& {\cellcolor{yellow!50}} \pm   0.04 & 1779.36& \pm   15.44 & 35.97& \pm   5.36 & 918.64& \pm   158.62 & 8.67& \pm   14.17 & 1.32& \pm   0.26  \\
Non-homogeneous  & 2.69& \pm   0.56 & {\cellcolor{yellow!50}} 0.08& {\cellcolor{yellow!50}} \pm   0.01 & 1771.68& \pm   15.31 & 9.45& \pm   1.68 & 869.36& \pm   151.3 & 8.36& \pm   12.72 & 1.34& \pm   0.3 \\
Non-homogeneous   & $\infty$ & & {\cellcolor{yellow!50}}	0.017& {\cellcolor{yellow!50}} \pm   0.002 &	1753.32& \pm   79.811 & 0.000006& \pm   0.000001 & 948.256& \pm   116.393 &	1.274& \pm   0.254  & 18.944& \pm   28.261 	\\
Travelling  Wave  &-11.73& \pm   0.627 & {\cellcolor{yellow!50}} 102.57& {\cellcolor{yellow!50}} \pm   62.53   & 1878.85& \pm   14.601 & 668.07& \pm   23.091  & 1113.04& \pm   74.405  & 147.76& \pm   38.868 & 628.26& \pm   187.239  \\ 
Travelling  Wave  & -5.9& \pm   0.543 & {\cellcolor{yellow!50}} 8.91& {\cellcolor{yellow!50}} \pm   1.364    & 1893.01& \pm   8.316 & 597.02& \pm   2.652   & 1040.68& \pm   64.72  & 26.04& \pm   4.451  & 33.99& \pm   0.81   \\
Travelling  Wave  & -1.84& \pm   0.565 & {\cellcolor{yellow!50}} 5.1& {\cellcolor{yellow!50}} \pm   0.825     & 1899.45& \pm   12.028 & 588.02& \pm   1.554   & 970.5& \pm   63.797   & 10.13& \pm   2.314  & 32.92& \pm   0.558  \\
Travelling  Wave  &1.83& \pm    0.834 & {\cellcolor{yellow!50}} 3.95& {\cellcolor{yellow!50}} \pm    1.065    & 1902.72& \pm    16.441 & 585.46& \pm    0.532   & 1088.83& \pm    150.451 & 4.08& \pm    0.976 & 31.6& \pm    0.329     \\
Travelling  Wave &  $\infty$ & & 1.47& \pm   0.2      & 1081.02& \pm   40.082 & 585& \pm   0         & 1446.47& \pm   60.782 &  {\cellcolor{yellow!50}}  0.12&  {\cellcolor{yellow!50}}  \pm   0.108  &  31.32&  \pm   0.295  \\
Segmented & -11.56& \pm0.0004 & 1037.52& \pm36.026  & 1373.83& \pm24.3314 & 949.2& \pm33.3454  & 724.26& \pm80.7343 & 957.85& \pm35.2736 & {\cellcolor{yellow!50}} 536.7& {\cellcolor{yellow!50}} \pm71.3607    \\
Segmented & -9.07& \pm0.0244  & 1067.23& \pm47.922  & 1383.48& \pm25.9373 & 907.05& \pm27.7681 & 626.9& \pm88.1976  & 938.54& \pm34.8523 & {\cellcolor{yellow!50}} 502.69& {\cellcolor{yellow!50}} \pm98.836    \\
Segmented & -3.05& \pm0.0198  & 1249.49& \pm79.8907 & 1382.94& \pm25.0862 & 863.72& \pm45.2521 & 496.23& \pm39.492  & 923.74& \pm26.4544 & {\cellcolor{yellow!50}} 397.71& {\cellcolor{yellow!50}} \pm128.3619  \\
Segmented & 5.48& \pm0.0187   & 1175.7& \pm18.6612  & 1372.06& \pm24.5076 & 973.45& \pm37.9708 & 448.47& \pm63.7549 & 908.77& \pm21.7454 & {\cellcolor{yellow!50}} 428.47& {\cellcolor{yellow!50}} \pm6.6073    \\
Segmented & $\infty$     &   & 623.45 & \pm 0.001       & 1337.82& \pm 25.0305 & {\cellcolor{yellow!50}} 348.7& {\cellcolor{yellow!50}} \pm 0.001        & 426.55& \pm0.001   & 887.16& \pm0.001   & 357.87& \pm0.1348            \\ 
\hline
\end{tabularx}
\end{table*}

\subsection{Homogeneous Vibration Data With Fixed Boundaries}
\label{subsec:homog_vibration}

Table~\ref{table:MSE_dicts} and \ref{tbl:freq_results} show that this relatively simple dataset achieves the overall best performance among the datasets. Wave-informed matrix factorization, wave-informed K-SVD, and dynamic mode decomposition perform best, albeit under different conditions. Wave-informed K-SVD and dynamic mode decomposition perform the best in no-noise conditions. Yet, dynamic mode decomposition is more sensitive to noise. This is evident in Tables~\ref{table:MSE_dicts},\ref{tbl:freq_results} as with increasing noise the errors indicated in the tables increase.  Wave-informed K-SVD performs best in low to mild noise ($\leq-3$~dB SNR) whilst wave-informed matrix factorization performs best in heavy noise ($>-3$~dB SNR). 


\subsection{Inhomogeneous Vibration Data With Spatial Decay}
\label{subsec:inhomog_vibration}

These results follow a similar trend as the homogeneous vibration data. In the inhomogeneous data, the spatial decay spreads the signal across the wavenumber domain, resulting in generally less compressible data that is more sensitive to noise. As a result, we see a reduction in performance for WIMF, WIKSVD, DMD in Table~\ref{table:MSE_dicts} and \ref{tbl:freq_results}. In case of infinite SNR, DMD performs best as it is designed to extract damped sinusoids \cite{noack2016recursive}. However, it remains sensitive to noise. 

PCA and EMD appear unstable with this dataset. This occurs because the parameter $\beta_n$ varies across each method and SNR. This randomness in conjunction with the incorrect extraction of multiple wavenumbers in a single component, as previously described for the homogeneous data, is the source for this error. As a result, EMD and PCA are to be sensitive to the variation in spatial decay. 


\begin{figure*}[p!]

    \centering
    \includegraphics[]{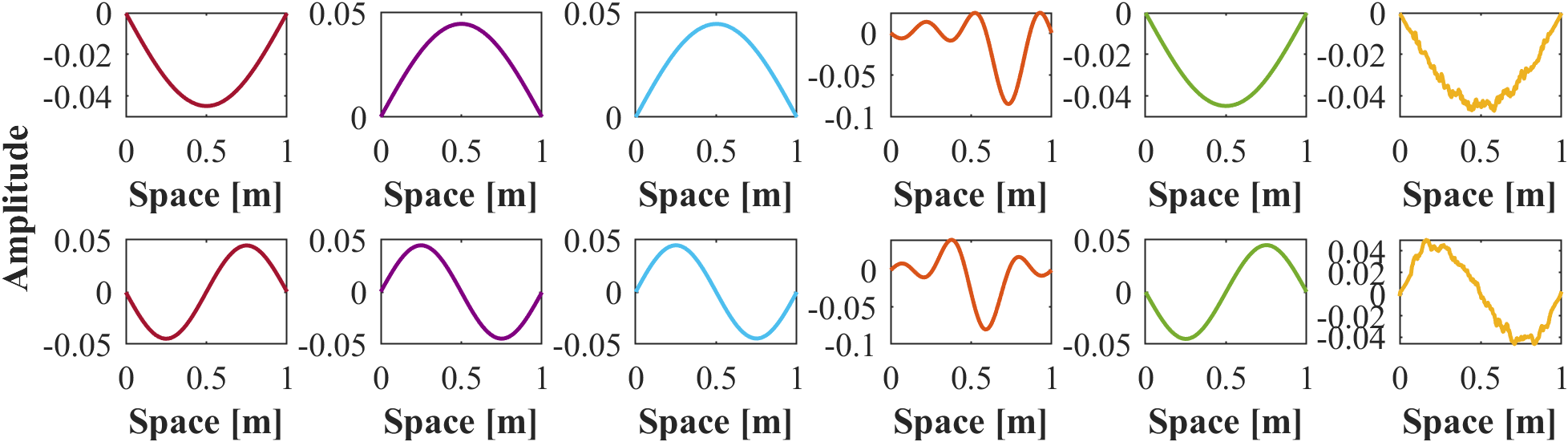}
    \caption{(a) Homogeneous Vibration}

    \centering
    \includegraphics[]{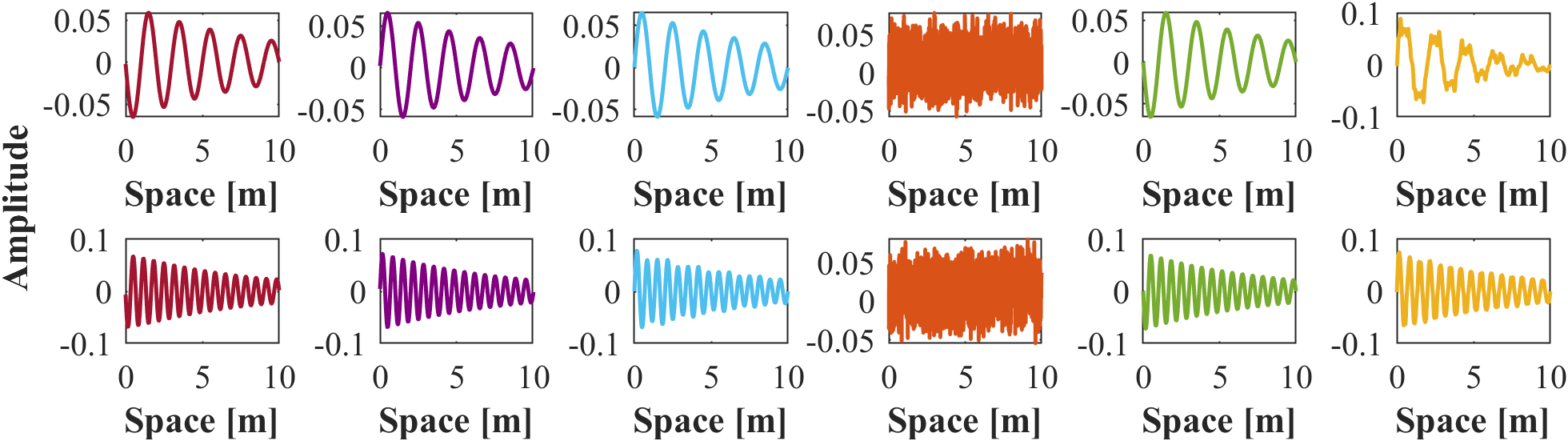}
    \caption*{(b) In-homogeneous Vibration}
  

    \centering
    \includegraphics[]{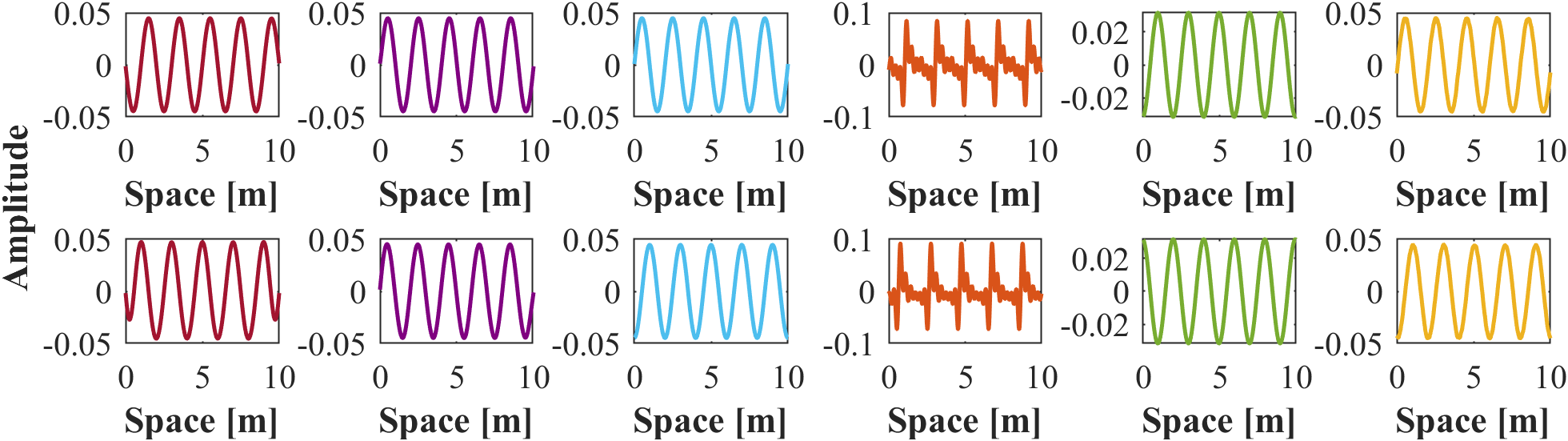}
    \caption*{(c) Traveling Plane Waves}

    \centering
    \includegraphics[]{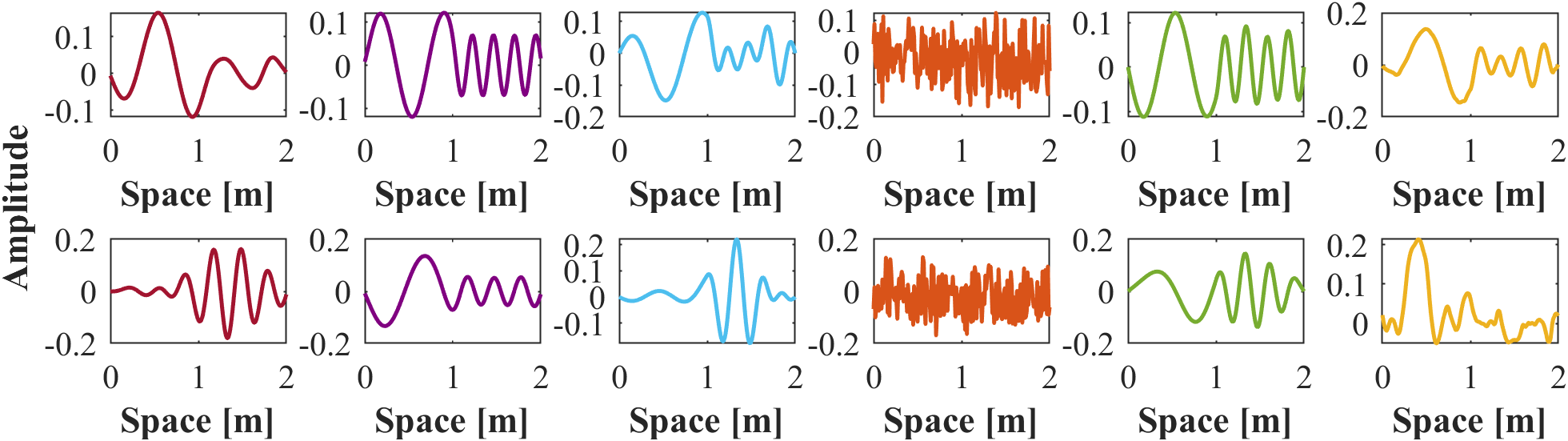}
    \caption*{(d) Multiple Segments}
    \caption{Two recovered modes (rows) obtained from (\Ai), (\B), (\C), (\Dc), (\E), (\F) for SNR = $\infty$.\label{fig:vibrational_modes}}    
    
\end{figure*}

\begin{figure*}[ht!]
    \centering
    \includegraphics[width=0.49\textwidth]{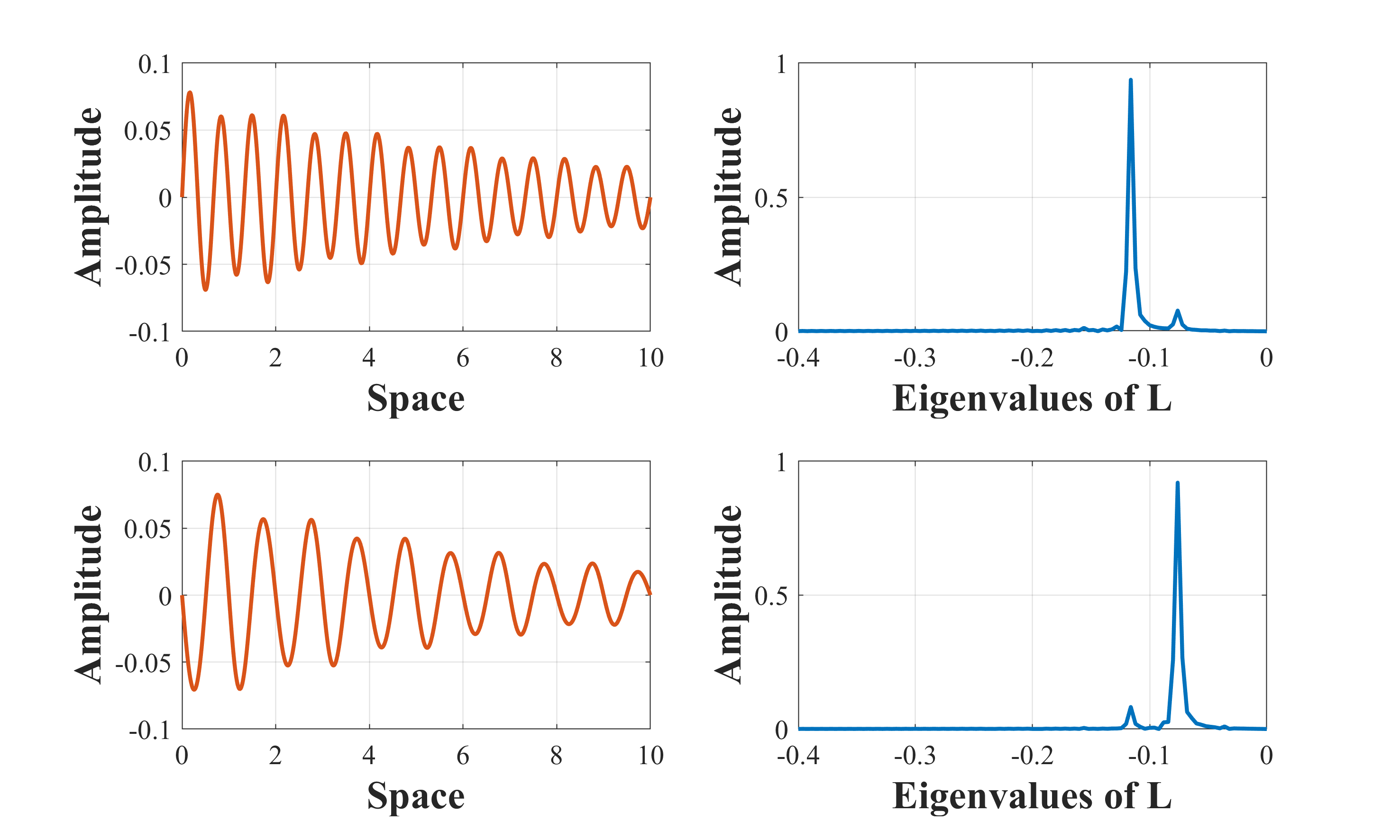}
    \includegraphics[width=0.49\textwidth]{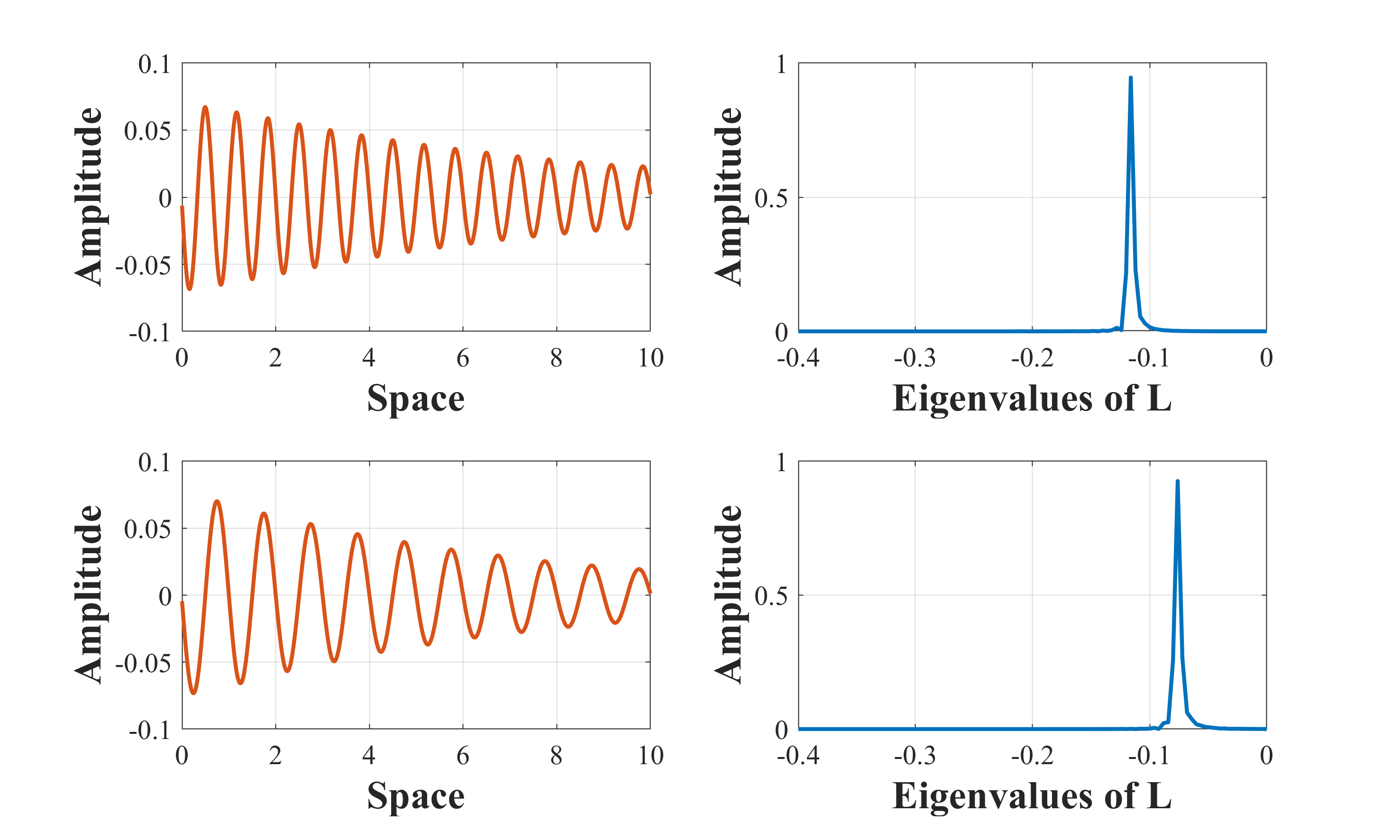}
    \caption{(a) Two columns from the basis obtained from PCA and the absolute value in the transformed domain (transformed to the eigenbasis of $\L$) (b) Two columns from the basis obtained from wave-informed matrix factorization and the absolute value in the transformed domain}
    \label{fig:x_k}
\end{figure*}

\subsection{Traveling Plane Wave Data}

\label{subsec:traveling_waves_data}

Due to the expression derived in \eqref{eqn:traveling_decomposed}, we extract $2N$ modes rather than $N$ modes. This is seen in Fig.~\ref{fig:vibrational_modes}(c), where the two most dominant modes have the same frequencies. It is also important to note that the decomposition is not unique due to the arbitrary phase term $\phi_n$. This discrepancy seen in the difference in PCA's relatively poor performance in Table~\ref{table:MSE_dicts} (where $\phi_n$ is assumed to be $0$) and the excellent performance in Table~\ref{tbl:freq_results}, which ignores the phase in its computation. We further observe in Fig.~\ref{fig:vibrational_modes}(c) that PCA does not have precise phases of $0$ and $90$ degrees. Similarly, DMD incorrectly extracts modes with equal phases. This is because DMD extracts a complex-valued factorization. This reconstructs the data correctly but creates a large error when compared with the true real-valued modes. Note that the Fig.~\ref{fig:vibrational_modes}(c) shows the real part of the DMD result but the result in Table~\ref{table:MSE_dicts} and \ref{tbl:freq_results} compute the error with the complex values. In no other datasets does DMD produce a strong imaginary component. 

In addition, WIKSVD appears to have difficulty extracting a mode with a $90$~degree phase based on the errors in Table~\ref{table:MSE_dicts} and that the second most prominent mode in Fig.~\ref{fig:vibrational_modes}(c) has the wrong frequency. This can be explained due to our choice of Laplacian matrix $\L$, which inherently assumes fixed, zero-valued boundary conditions. Similarly, we observe that WIMF in Fig.~\ref{fig:vibrational_modes}(c) forces zero-valued boundaries, but the second mode immediately adjusts to have an effective 90 degree phase. This shows that WIMF is more adaptable than WIKSVD. As a result, WIMF overall achieves the best results, but has a larger error relative to the previous two datasets.

\subsection{Multi-Segment Data} 
\label{subsec:multi_segment_data}
Tables~\ref{table:MSE_dicts} and \ref{tbl:freq_results} show that this dataset is the most difficult of the four. Under low noise conditions, WIMF achieves the best performance. From  Fig.~\ref{fig:vibrational_modes}(d), we observe the relatively large error is due to wavenumbers information "leaking" from one segment to the other. In particular, WIKSVD and DMD include the the modal bevhavior both regions in their components. Hence, these solution incorrectly include two different wavenumbers in their components. 

PCA and EMD achieve decent localization of the modes but fail to extract the correct frequency content, as evident in Table~ \ref{tbl:freq_results}. WIMF, EMD, and PCA all demonstrate some degree of separation and localization, but are not tightly bound to their spatial region. In particular, WIMF does not achieve stronger localization because the wave equation constraints are placed on each mode rather than the complete solution, and the discontinuity between segments does not satisfy the wave equation for each individual mode.

\subsection{General discussion on wave-informed matrix factorization}

In the performance analysis of this method, we observe that \emph{wave-informed matrix factorization} successfully recovers pure sinusoids with zero phase in the homogeneous vibrations case in Section~\ref{subsec:homog_vibration} and recovers decaying sinusoids in the inhomogenous vibrations case in Section~\ref{subsec:inhomog_vibration}. In Section~\ref{subsec:traveling_waves_data}, it recovers approximate cosines that start at zero and eventually converge to the sinusoid shape, when the true result has a phase shift. For the multi-segment case in Section~\ref{subsec:multi_segment_data}, it recovers sinusoidal vibrations that locally to each region. 

We would like to emphasize here that hard constraining the wave equation, i.e. solving an optimization similar to
\begin{align*}
    \underset{\D,\X,N,\k}{\argmin} \frac{1}{2} \| \Y - \D\X^{\top} \|_F^2 + \frac{\lambda}{2}\left( \|\D\|_F^2 + \|\X\|_F^2 \right)  \\ \st \L \D_i = -k_i^2 \D_i, \; \forall i \in [N]
\end{align*}
would be equivalent to solving, for $\D$, the eigenvectors of $\L$ that closely span (in the $\ell_2$ sense) the column space of $\Y$. These are pure sinusoids with zero phase. In contrast to this, we observe that the columns of $\D$ are dictated by both data and the soft constraint introduced by the wave equation. In particular, we note that the columns extracted in most of the cases only approximately satisfy the wave-constraint and do not satisfy the wave equation. This property distinguishes our algorithm from traditional factorization methods that are either data dependent or are entirely parameterized by the assumed physics. We attribute this to the recovery of components such as exponentially decaying sinusoids (which do not satisfy the wave equation).

\section{Conclusions}
We have introduced  wave-informed matrix factorization and developed a framework and an algorithm with provable, global optimally guarantees for the same. More generally, this work introduced a methodology to enforce linear homogeneous partial differential equation to influence a matrix factorization algorithm, and  demonstrated this for the case of the time-independent version of the Helmholtz wave equation. The output from the algorithm was compared with that of state-of-the-art algorithms for modal and component analysis. We demonstrated that the wave-informed approach learns representations that are more physically relevant and practical for the purpose of modal analysis. 

Future work will include generalizing this approach to a variety of linear PDEs beyond the wave equation (especially to adapt the works \cite{lai2020full, osti_1487358} to our framework) as well as wave propagation along more than one dimension and extension to applications in baseline-free anomaly detection for structural health monitoring \cite{alguri2018baseline, alguri2021sim}.

\section{Acknowledgements}

This work is partially supported by NSF EECS-1839704, NSF CISE-1747783, NIH NIA 1R01AG067396, ARO MURI
W911NF-17-1-0304, and NSF-Simons MoDL 2031985.

\ifCLASSOPTIONcaptionsoff
  \newpage
\fi



%

\IEEEpeerreviewmaketitle

\bibliographystyle{IEEEtran}
\bibliography{bibliography.bib}

\begin{IEEEbiography}[{\includegraphics[width=1in,height=1.25in,clip,keepaspectratio]{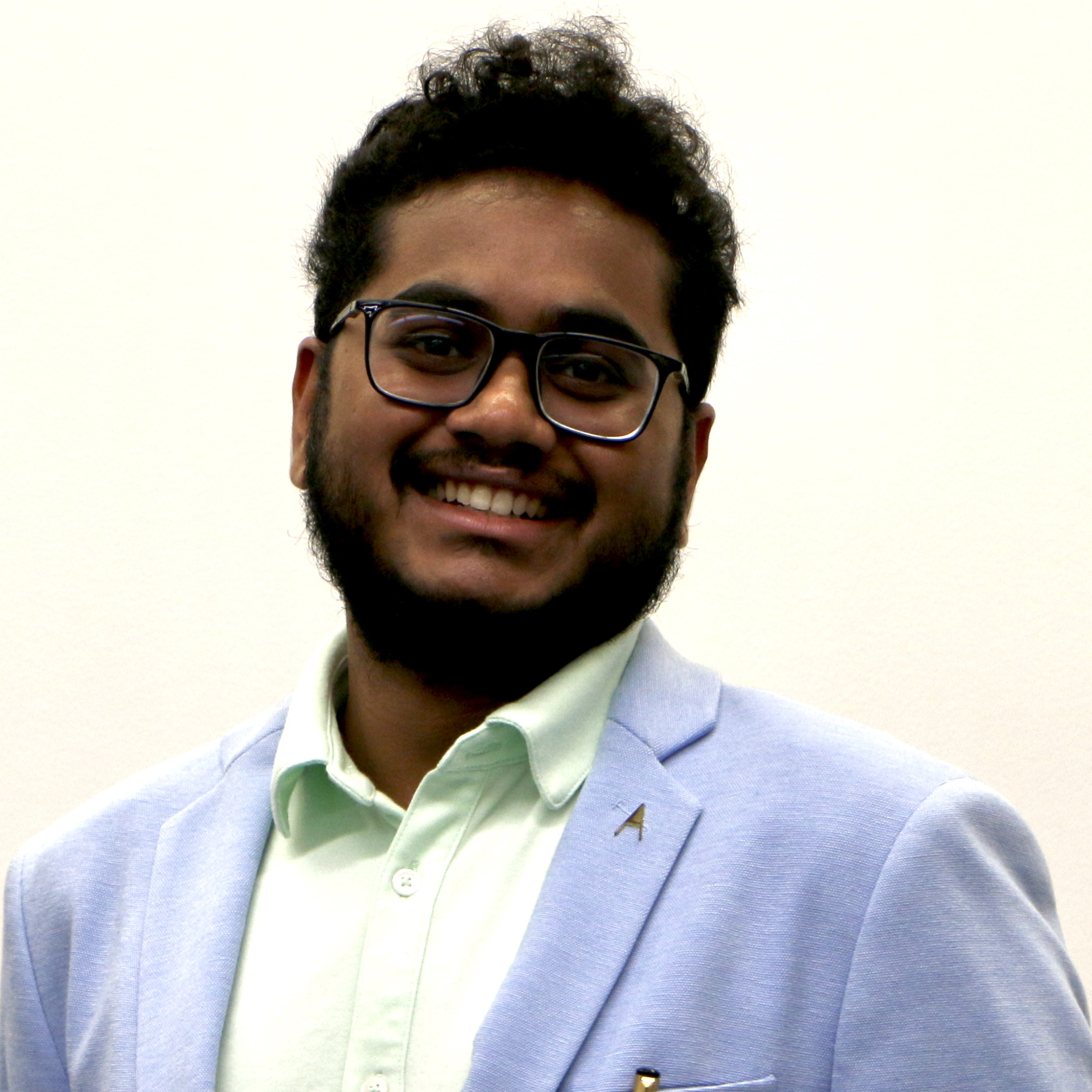}}]{Harsha Vardhan Tetali}
has a PhD in Electrical and Computer Engineering from University of Florida, Gainesville. Prior to this, he received his B.Tech in Electronics and Communication Engineering from SVNIT, Surat and M.Tech in Electrical Engineering from IIT Gandhinagar. His research spans signal processing, machine learning, and optimization theory. He currently works as a staff engineer at Marvell Technology.
\end{IEEEbiography}

\begin{IEEEbiography}[{\includegraphics[width=1in,height=1.25in,clip,keepaspectratio]{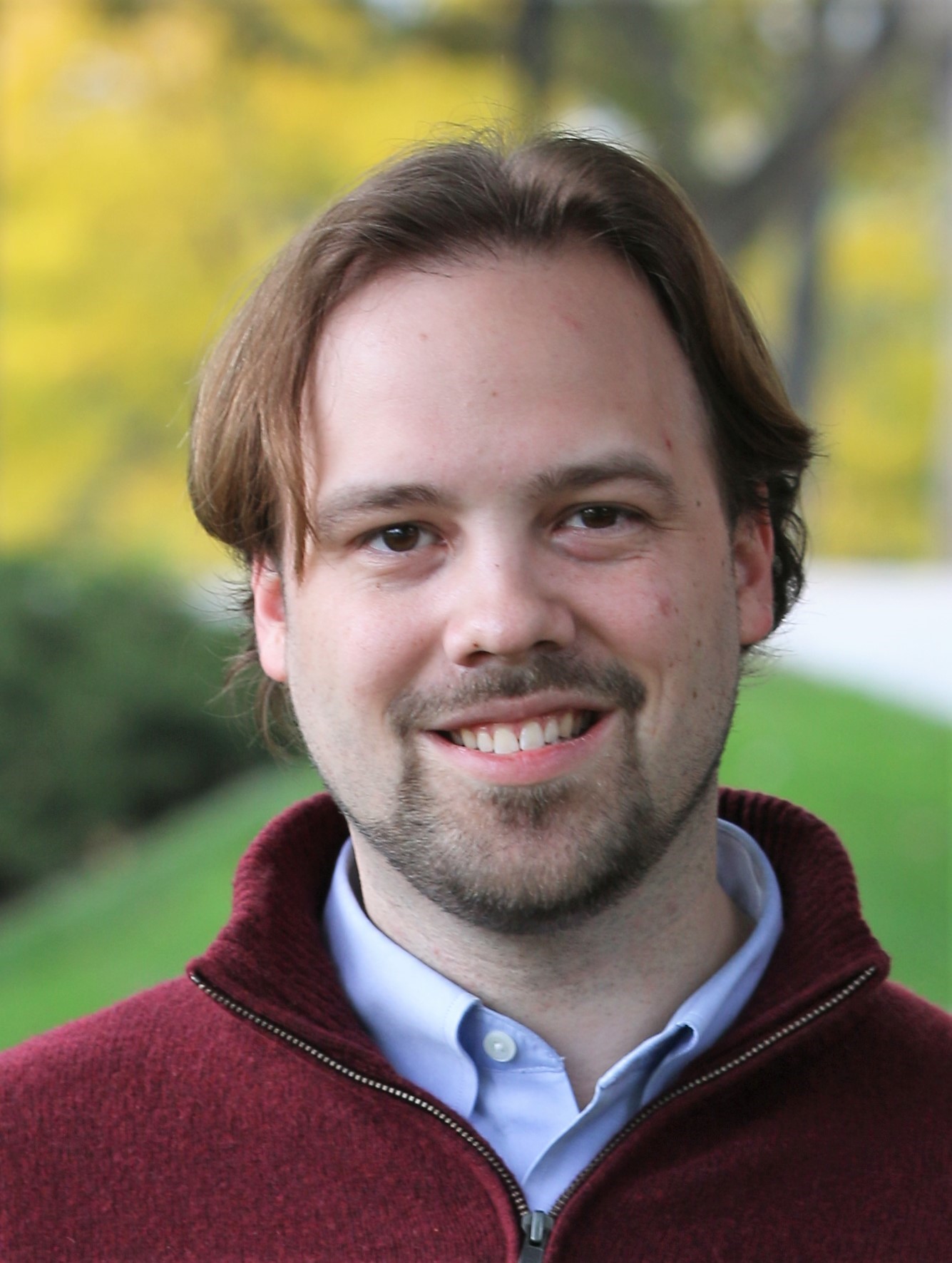}}]{Joel B. Harley} (S'05-M'14) received his B.S. degree in Electrical Engineering from Tufts University in Medford, MA, USA. He received his M.S. and Ph.D. degrees in Electrical and Computer Engineering from Carnegie Mellon University in Pittsburgh, PA, USA in 2011 and 2014, respectively.
In 2018, he joined the University of Florida, where he is currently an associate professor in the Department of Electrical and Computer Engineering. Previously, he was an assistant professor in the Department of Electrical and Computer Engineering at the University of Utah. His research interests include integrating novel signal processing, machine learning, and data science methods for the analysis of waves and time-series data.
Dr. Harley's awards and honors include the 2021 Achenbach Medal from the International Workshop on Structural Health Monitoring, a 2020 IEEE Ultrasonics, Ferroelectrics, and Frequency Control Society Star Ambassador Award, a 2020 and 2018 Air Force Summer Faculty Fellowship, a 2017 Air Force Young Investigator Award, a 2014 Carnegie Mellon A. G. Jordan Award (for academic excellence and exceptional service to the community). He has published more than 90 technical journal and conference papers, including four best student papers. He is a member of the Ultrasonics, Ferroelectrics, and Frequency Control Society, a member of the IEEE Signal Processing Society, and a member of the Acoustical Society of America.
\end{IEEEbiography}

\begin{IEEEbiography}[{\includegraphics[width=1in,height=1.25in,clip,keepaspectratio]{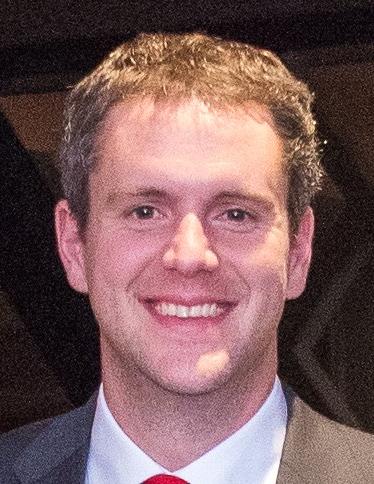}}]{Benjamin D. Haeffele}
is an Associate Research Scientist in the Mathematical Institute for Data Science and the Center for Imaging Science at Johns Hopkins University. His research interests involve developing theory and algorithms for processing high-dimensional data at the intersection of machine learning, optimization, and computer vision. In addition to basic research in data science he also works on a variety of applications in medicine, microscopy, and computational imaging. He received his Ph.D. in Biomedical Engineering at Johns Hopkins University in 2015 and his B.S. in Electrical Engineering from the Georgia Institute of Technology in 2006.
\end{IEEEbiography}

\clearpage

\setcounter{page}{1}
\section{Supplementary Material}

\subsection{Global Optimal Solution is Achievable}
\label{ap:global_optimal}
{
\begin{proposition}
The optimization problem in \eqref{eq:main_obj} is a special case of the problem considered in \cite{haeffele2019structured}, which shows that a global minimum is achievable.
\end{proposition}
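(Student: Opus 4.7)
The plan is to verify that \eqref{eq:main_obj} fits the template of \eqref{eq:gen_obj} from \cite{haeffele2019structured}. First I would observe that the data term $L(\widehat{\Y}) = \tfrac{1}{2}\|\Y - \widehat{\Y}\|_F^2$ is convex and continuously differentiable in $\widehat{\Y}$, which handles the requirement on $L$. It then remains to verify that the column-wise regularizer
\[
\bar{\theta}(\d,\x) \;=\; \|\x\|_F^2 + \|\d\|_F^2 + \gamma \min_{k} \|\L\d + k^2\d\|_F^2
\]
meets the three conditions listed immediately after \eqref{eq:gen_obj}.

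For positive $2$-homogeneity (Condition~1), I would substitute $(\alpha\d,\alpha\x)$ and pull $\alpha^2$ out of each squared norm. Crucially, $\|\L(\alpha\d) + k^2(\alpha\d)\|_F^2 = \alpha^2 \|\L\d+k^2\d\|_F^2$, so the inner minimization over $k$ is unaffected by the scalar $\alpha$ and the entire expression therefore scales by exactly $\alpha^2$. Non-negativity (Condition~2) is immediate, since $\bar{\theta}$ is a non-negative combination of squared Frobenius norms.

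For the coercivity requirement (Condition~3), the key fact I would invoke is that $\|\d\x^\top\|_F = \|\d\|_F \|\x\|_F$, combined with the AM--GM inequality $\|\d\|_F^2 + \|\x\|_F^2 \geq 2\|\d\|_F \|\x\|_F = 2\|\d\x^\top\|_F$. Hence whenever $\|\d^{(n)}(\x^{(n)})^\top\|_F \to \infty$, necessarily $\|\d^{(n)}\|_F^2 + \|\x^{(n)}\|_F^2 \to \infty$, and since the remaining $\min_k$ term is non-negative we conclude $\bar{\theta}(\d^{(n)},\x^{(n)}) \to \infty$.

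The only subtlety I anticipate is the interplay between homogeneity and the embedded inner minimization over $k$, but this dissolves as soon as one observes that the inner objective is itself $\alpha^2$-homogeneous in $\d$, so the argmin is scale-invariant. With all three conditions on $\bar{\theta}$ verified, together with the convex, once-differentiable data loss, the problem \eqref{eq:main_obj} is an instance of \eqref{eq:gen_obj} and the global-optimality framework of \cite{haeffele2019structured} applies.
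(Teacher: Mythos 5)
Your proposal is correct and follows essentially the same route as the paper's own proof: verify convexity and differentiability of the quadratic loss, then check positive $2$-homogeneity (using scale-invariance of the inner $\min_k$), non-negativity, and coercivity via the same AM--GM bound $\|\d\x^\top\|_F = \|\d\|_2\|\x\|_2 \leq \tfrac{1}{2}(\|\d\|_2^2 + \|\x\|_2^2) \leq \bar\theta(\d,\x)$. No substantive differences.
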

}

\begin{proof}
The problem considered in \cite{haeffele2019structured} is stated in \eqref{eq:gen_obj}. Comparing it with (\ref{eq:main_obj}) we have:
\begin{gather}
    \ell \left( \D\X^{\top} \right) = \tfrac{1}{2}\| \Y - \D\X^{\top} \|_F^2\\
    \widebar{\theta} \left( \d_i, \x_i \right) = \tfrac{1}{2} \! \left( \|\x_i\|_F^2 + \min_{k_i} \d_i^\top \A(k_i) \d_i \right)
\end{gather}
Observe that $\ell ( \widehat{\Y} ) = \tfrac{1}{2} \| \Y - \widehat{\Y} \|^2_F$ is convex and differentiable w.r.t. $\widehat \Y$.  
 To realize that the optimization problem in \eqref{eq:main_obj} is a special case of the problem considered in \cite{haeffele2019structured}, it suffices to check that $\widebar \theta(\d,\x)$ satisfies three conditions:
\begin{enumerate}
    \item $\widebar \theta(\alpha \d, \alpha \x) = \alpha^2 \theta(\d, \x), \ \forall (\d,\x)$ and $\forall \alpha \geq 0$.
    
    For any $\alpha > 0$, $\forall (\d, \x)$ :
    \begin{align}
        \widebar{\theta}\left( \alpha \d, \alpha \x \right)  &= \| \alpha \d \|^2_2 + \| \alpha \x \|^2_2 + \gamma \min_{k} \| \alpha \L \d + k^2 \alpha \d \|^2_F \nonumber \\
        &= \alpha^2 \| \d \|^2_2 + \alpha^2 \| \x \|^2_2 + \alpha^2 \gamma \min_{k} \| \L \d + k^2 \d \|^2_F \nonumber \\
       &= \alpha^2 \widebar{\theta}(\d,\x)
    \end{align}
    
    where we note that scaling $\d$ by $\alpha > 0$ does not change the optimal value of $k$ in the third term, allowing $\alpha^2$ to be moved outside of the norm.
    
    \item $\widebar \theta(\d, \x) \geq 0, \ \forall (\d,\x)$.

    All terms in $\widebar{\theta}(\d,\x)$ are non-negative, thus, $\forall (\d,\x)$, we have $\widebar{\theta}(\d,\x)\geq 0$.
    \item For all sequences $(\d^{(n)},\x^{(n)})$ such that $\|\d^{(n)}(\x^{(n)})^\top\| \rightarrow \infty$ then $\widebar \theta(\d^{(n)},\x^{(n)}) \rightarrow \infty$.
    
    Here, note that the following is true for all $(\d,\x)$:
    \begin{equation}
    \begin{split}
            \|\d \x^\top \|_F = \|\d\|_2 \|\x\|_2 \leq \tfrac{1}{2}(\|\d\|_2^2 + \|\x\|_2^2) \\ \leq \tfrac{1}{2}\left(\|\d\|_2^2 + \|\x\|_2^2 + \min_{k} \| \L \d + k^2 \d \|^2_2\right)
    \end{split}
    \end{equation}
    As a result we have $\forall (\d, \x)$ that $\|\d\x^\top\|_F \leq \widebar \theta(\d,\x)$, completing the result.
\end{enumerate}
Hence all of the conditions are satisfied for the global optimal solution to be achievable.
\end{proof}

\subsection{Solution to the Polar Problem }
\label{ap:polar_problem}

We restate theorem \ref{thm:polar} below.
{
\renewcommand{\thetheorem}{\ref{thm:polar}}
\begin{theorem}
For \eqref{eq:main_obj}, the polar problem in \eqref{eq:polar_def} is 
\begin{align}
\Omega_\theta^\circ (\Z) = & \max_{\d, \x, k} \, \d^\top \Z \x  \nonumber \\
&\mathrm{s.t.} ~ \d^\top \A(k) \d \leq 1, \|\x\|^2_F \leq 1, \ 0 \leq k \leq 2. \nonumber
\end{align}
where $\A(k)$ is as defined in \eqref{eq:Ak}.
Further, if we define $k^*$ as%
\begin{equation}
k^* = \argmax_{k \in [0,2]} \| \A(k)^{-1/2} \Z \|_2 \; . 
\end{equation}
Then the optimal values of $\d,\x, k$ are given as $\d^* = \A(k^*)^{-1/2} \widebar \d$, $\x^* = \widebar \x$, and $k^*$.  Where $\widebar \d$ and $\widebar \x$ are the left and right singular vectors, respectively, associated with the largest singular value of $\A(k^*)^{-1/2} \Z$. 
\end{theorem}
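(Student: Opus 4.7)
The plan is to reduce the polar \eqref{eq:polar_def} to a one-dimensional line search over $k$ by (i) lifting the inner minimization over $k$ in $\bar\theta$ to an outer variable, (ii) decoupling the two quadratic terms in the constraint using the bilinearity of $\d^\top \Z \x$, and (iii) recognizing the resulting bilinear problem as a weighted operator norm whose maximizer is read directly off an SVD.

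First I would note that the only $k$-dependence in $\bar\theta$ sits inside a $\min_k$, so the feasibility constraint $\bar\theta(\d,\x)\leq 1$ is equivalent to the \emph{existence} of some $k$ with $\|\x\|^2 + \d^\top \A(k)\d \leq 1$. Promoting $k$ to a decision variable converts the polar into
\begin{equation*}
\sup_{\d,\x,k}\ \d^\top \Z \x \st \|\x\|^2 + \d^\top \A(k)\d \leq 1.
\end{equation*}
I would restrict $k$ to $[0,2]$ using the bound recorded after \eqref{eq:Lap_mat}: with $\Delta\ell = 1$, the optimal $k^2$ for the Helmholtz residual must lie in $[0,4]$ (the spectral range of $-\L$), and for any $k^2$ outside this interval $\A(k)$ only grows in the positive-semidefinite order, so such $k$ cannot be optimal for the supremum.

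Next I would separate the single additive constraint into the two independent constraints in the theorem. Because $\d^\top \Z \x$ is bilinear and invariant under the rescaling $(\d,\x) \mapsto (\alpha\d,\x/\alpha)$ for $\alpha > 0$, while the two quadratic terms scale oppositely as $\alpha^2$ and $\alpha^{-2}$, an AM--GM/balancing argument shows that any maximizing triple can be rescaled so that $\d^\top\A(k)\d = \|\x\|^2$ at the optimum. This reparametrization yields the equivalent form
\begin{equation*}
\max_{\d,\x,k}\ \d^\top \Z \x \st \d^\top \A(k)\d \leq 1,\ \|\x\|^2 \leq 1,\ 0 \leq k \leq 2,
\end{equation*}
which is the stated characterization (any overall scaling constant arising from the balancing is absorbed in the subsequent normalization).

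Finally, with $k$ fixed, I would reduce the inner maximization to a top-singular-value computation. Since $\A(k) = \I + \gamma(\L + k^2 \I)^2 \succeq \I$, the matrix $\A(k)^{1/2}$ is invertible on $[0,2]$, so the substitution $\tilde\d = \A(k)^{1/2}\d$ turns the inner problem into $\max_{\|\tilde\d\|\leq 1,\,\|\x\|\leq 1}\tilde\d^\top \A(k)^{-1/2}\Z\,\x$, whose optimal value is $\|\A(k)^{-1/2}\Z\|_2$ and is attained by the leading left/right singular vectors $\bar\d,\bar\x$ of $\A(k)^{-1/2}\Z$. Optimizing this value over $k \in [0,2]$ gives $k^*$ as in \eqref{eq:k_max}, and undoing the substitution yields $\d^* = \A(k^*)^{-1/2}\bar\d$, $\x^* = \bar\x$. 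The step I expect to need the most care is the homogeneity/balancing argument, since the original polar couples $\d$ and $\x$ through an additive rather than multiplicative constraint; one must verify that the rescaling preserves optimality and handles degenerate cases (e.g., when either factor vanishes) without changing the supremum.
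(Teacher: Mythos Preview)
Your proposal is correct and follows essentially the same route as the paper: lift the inner $\min_k$ in $\bar\theta$ to an outer maximization variable (restricted to the spectral range of $-\L$), decouple the single additive constraint into two unit-ball constraints via the scale-invariance $(\d,\x)\mapsto(\alpha\d,\x/\alpha)$ of the bilinear objective, then whiten with $\A(k)^{1/2}$ and read the inner optimum off the top singular pair of $\A(k)^{-1/2}\Z$. The paper carries out these steps in a slightly different order (it splits the constraint first, invoking ``bilinearity'' without spelling out the balancing, and then promotes $k$), but the substance is identical; your explicit AM--GM/rescaling justification for the constraint splitting is in fact more careful than what the paper writes, and your flag that this is the step requiring the most care is apt.
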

\addtocounter{theorem}{-1}
}


\begin{proof}
The polar problem associated with the objective of the form \eqref{eq:gen_obj} as given in \cite{haeffele2019structured} is:
\begin{gather}
       \Omega_\theta^\circ (\Z) = \sup_{\d,\x} \d^{\top} \Z \x \st \widebar \theta(\d,\x) \leq 1 
\end{gather}
For our particular problem, due to the bilinearity between $\d$ and $\x$ in the objective the above is equivalent to:
\begin{gather}
\label{eqn:polar_descrip}
       \begin{split}
         \Omega_\theta^\circ (\Z)   &= \sup_{\d,\x} \d^{\top} \Z \x \\ \st \| \x \|^2_2 &\leq 1, \|\d\|^2_2 + \gamma \min_{k} \| \L \d + k^2 \d \|^2_2 \leq 1
       \end{split}
\end{gather}
Note that this is equivalent to moving the minimization w.r.t. $k$ in the regularization constraint to a maximization over $k$:
 \begin{gather}
\begin{split}
             \Omega_\theta^\circ (\Z)   &= \sup_{\d,\x,k} \d^{\top} \Z \x \\ \st \| \x \|^2_2 &\leq 1, \|\d\|^2_2 + \gamma \| \L \d + k^2 \d \|^2_2 \leq 1
\end{split}
\end{gather}
Next, note that maximizing w.r.t. $\d$ while holding $\x$ and $k$ fixed is equivalent to solving a problem of the form:
\begin{align}
\max_\d \langle \d, \Z \x \rangle \st \d^\top \A(k_i) \d \leq 1
\end{align}
for some positive definite matrix $\A(k_i)$.  If we make the change of variables $\widebar \d = \A(k_i)^{1/2} \d$, this then becomes:
\begin{align}
\max_{\widebar \d} \ \ &\langle \widebar \d, \A(k_i)^{-1/2} \Z \x \rangle \st \| \widebar \d \|_2^2 \leq 1 = \| \A(k_i)^{-1/2} \Z \x \|_2 
\end{align}
where the optimal $\widebar \d$ and $\d$ are obtained at 
\begin{align}
\widebar \d_{opt} &= \frac{\A^{-1/2} \Z \x}{\|\A^{-1/2} \Z \x\|_2} \\
\label{eq:d_opt}
\d_{opt} &= \A^{-1/2} \widebar \d_{opt} = \frac{\A^{-1} \Z \x}{\|\A^{-1/2} \Z \x\|_2}
\end{align}
For our particular problem, if we make the change of variables $\widebar k = k^2$ we have that $\A$ is given by:
\begin{equation}
\A(\widebar k) = (1+\gamma \widebar k^2) \I + \gamma \L^2 + 2 \widebar k \gamma \L
\end{equation}
where we have used that $\L$ is a symmetric matrix.  If we let $\L = \Gamma \Lambda \Gamma^\top$ be an eigen-decomposition of $\L$ then we can also represent $\A(\widebar k)$ and $\A(\widebar k)^{-1/2}$ as:
\begin{align}
\A(\widebar k) &= \Gamma \left( (1+\gamma \widebar k^2) \I + \gamma \Lambda^2 + 2 \widebar k \gamma \Lambda \right) \Gamma^\top \\
 &= \Gamma (\I + \gamma ( \widebar k \I +  \Lambda)^2) \Gamma^\top \\ 
 \label{eq:A12}
\A(\widebar k)^{-1/2} &= \Gamma (\I + \gamma ( \widebar k \I +  \Lambda)^2)^{-1/2} \Gamma^\top 
\end{align}
Now we substitute back into the original polar problem:
\begin{align}
\Omega^\circ(\Z) &= \max_{\x, \widebar k} \| \A(\widebar k)^{-1/2} \Z \x \|_2 \ST \|\x\|_2^2 \leq 1 \\
\label{eq:k_line}
&= \max_{\widebar k} \|(\A(\widebar k)^{-1/2} \Z \|_2
\end{align}
where $\|\cdot\|_2$ denotes the spectral norm (maximum singular value).  Similarly, for a given $\widebar k$ the optimal $\x$ is given as the right singular vector of $\A(\widebar k)^{-1/2}\Z$ associated with the largest singular value.

As a result, we can solve the polar by performing a line search over $\widebar k$, then once an optimal $\widebar k^*$ is found we get $\x^*$ as the largest right singular vector of $\A(\widebar k^*)^{-1/2} \Z$ and the optimal $\d^*$ from \eqref{eq:d_opt} (where $\d_{opt}$ will be the largest left singular vector of $\A(\widebar k^*)^{-1/2} \Z$ multiplied by $\A(\widebar k^*)^{-1/2}$).

Now, an upper bound for $\widebar k$ can be calculated from the fact that the optimal $\widebar k$ is defined using a minimization problem, i.e.
\begin{eqnarray}
    \widebar k^* = \argmin_{\widebar k} \| \L \d + \widebar k \d \|^2
\end{eqnarray}
So we note for any $\d$,
\begin{eqnarray}
   \widebar k^* = -\frac{ \d^\top \L \d} {\|\d\|_2^2}
   \label{eq:opt_k}
\end{eqnarray}
which is bounded by the smallest eigenvalue of $\L$ (note that $\L$ is negative (semi)definite). We cite the literature on eigenvalues of discrete second derivatives \cite{chung2000discrete} to note that all eigenvalues of $\L$ (irrespective of the boundary conditions) lie in the range $[\frac{-4}{\Delta l},0]$, since we specifically chose $\Delta l = 1$ (without loss of generality), we have that all eigenvalues of $\L$ lie in the range$[-4,0]$.






As a result, we need to only consider $\widebar k$ in the range $[0,4]$:
\begin{equation}
\label{eqn:line_search_obj}
\widebar k^* = \argmax_{\widebar k \in [0,4]} \| \A(\widebar k)^{-1/2} \Z \|_2
\end{equation}

\end{proof}

\subsection{Optimization Algorithm is Solvable in Polynomial TIme}
\label{ap:finite_computations}

We first prove a lemma for Lipschitz constants.
\begin{lemma}
\label{lem:L_bound}
Given a set of constants $\lambda_i, \ i=1, 2, \ldots$ such that $\forall i$, $\mu_\lambda \leq \lambda_i \leq 0$, and a constant $\gamma > 0$, let the function $f$ be defined as:
\begin{equation}
f(x) = \max_i \frac{1}{\sqrt{1+\gamma(x + \lambda_i)^2}}.
\end{equation}
Then, over the domain $0 \leq x \leq \mu_x$ $f$ is Lipschitz continuous with Lipschitz constant $L_f$ bounded as follows:
\begin{equation}
\begin{split}
    L_f &\leq \begin{cases} \frac{2}{3\sqrt{3}} \sqrt{\gamma}, \qquad  \gamma\geq \frac{1}{2 \max \{ \mu_\lambda^2, \mu_x^2 \}} \\
\gamma \max \{ -\mu_\lambda, \mu_x\} (1+\gamma \max \{\mu_\lambda^2, \mu_x^2\} )^{-\tfrac{3}{2}}, \quad \mathrm{otherwise}
\end{cases}   \\ &\leq  \frac{2}{3 \sqrt{3}}\sqrt{\gamma}. 
\end{split}
\end{equation}
\end{lemma}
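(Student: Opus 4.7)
The plan is to reduce the Lipschitz analysis of the pointwise max $f$ to a one-variable calculus problem for a single scalar function $h$, then do a clean two-case analysis based on whether the unique critical point of $h$ lies inside the relevant interval.

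First, since $f(x)=\max_i g_i(x)$ with $g_i(x)=(1+\gamma(x+\lambda_i)^2)^{-1/2}$, the standard fact that a pointwise max of Lipschitz functions is Lipschitz with constant at most the supremum of the individual constants gives $L_f\leq \sup_i L_{g_i}$. Each $g_i$ is smooth on $[0,\mu_x]$, so $L_{g_i}=\sup_{x\in[0,\mu_x]}|g_i'(x)|$. A direct differentiation yields
\begin{equation}
g_i'(x) \;=\; -\,\frac{\gamma(x+\lambda_i)}{\bigl(1+\gamma(x+\lambda_i)^2\bigr)^{3/2}}.
\end{equation}
Substituting $u=x+\lambda_i$ and defining $h(s):=\gamma s(1+\gamma s^2)^{-3/2}$ for $s\geq 0$, we have $|g_i'(x)|=h(|u|)$. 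As $i$ varies with $\lambda_i\in[\mu_\lambda,0]$ and $x\in[0,\mu_x]$, the argument $u$ ranges over a subset of $[\mu_\lambda,\mu_x]$, so $|u|$ ranges over a subset of $[0,M]$ with $M:=\max\{-\mu_\lambda,\mu_x\}$. Therefore $L_f\leq \sup_{s\in[0,M]} h(s)$.

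Next I would analyze $h$ on $[0,\infty)$. Computing $h'(s)=\gamma(1-2\gamma s^2)(1+\gamma s^2)^{-5/2}$ shows $h$ has a unique critical point at $s^*=1/\sqrt{2\gamma}$, is strictly increasing on $[0,s^*]$ and strictly decreasing on $[s^*,\infty)$, with maximum value $h(s^*)=\tfrac{2}{3\sqrt{3}}\sqrt{\gamma}$ (obtained by plugging $s^*$ in and simplifying). Now do the two cases: (i) if $\gamma\geq \tfrac{1}{2M^2}$, equivalently $s^*\leq M$, then $s^*$ lies in $[0,M]$, so $\sup_{s\in[0,M]}h(s)=h(s^*)=\tfrac{2}{3\sqrt{3}}\sqrt{\gamma}$; (ii) if $\gamma<\tfrac{1}{2M^2}$, then $s^*>M$ and $h$ is increasing on all of $[0,M]$, so the sup is attained at the right endpoint, giving $h(M)=\gamma M(1+\gamma M^2)^{-3/2}$. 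Since $M^2=\max\{\mu_\lambda^2,\mu_x^2\}$, this matches the two-case bound in the statement.

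Finally, the unconditional bound $L_f\leq \tfrac{2}{3\sqrt{3}}\sqrt{\gamma}$ follows immediately: in case (i) the stated expression is exactly $\tfrac{2}{3\sqrt{3}}\sqrt{\gamma}$, and in case (ii) the stated expression equals $h(M)$, which is at most the global maximum $h(s^*)=\tfrac{2}{3\sqrt{3}}\sqrt{\gamma}$ of $h$ on $[0,\infty)$.

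There is no real obstacle here; the proof is a routine calculus exercise. The only points requiring minor care are (a) invoking the Lipschitz-of-max lemma correctly, and (b) observing that the admissible range of $|u|=|x+\lambda_i|$ across all $i$ and $x$ is precisely captured by $[0,M]$ with $M=\max\{-\mu_\lambda,\mu_x\}$, so that the case split is driven by the geometric comparison between $s^*=1/\sqrt{2\gamma}$ and $M$.
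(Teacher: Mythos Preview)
Your proposal is correct and follows essentially the same approach as the paper: reduce via the max-of-Lipschitz-functions lemma to bounding $\sup_{s\in[0,M]} h(s)$ with $h(s)=\gamma s(1+\gamma s^2)^{-3/2}$ and $M=\max\{-\mu_\lambda,\mu_x\}$, locate the unique critical point $s^*=1/\sqrt{2\gamma}$, and split into the two cases depending on whether $s^*\leq M$. The only cosmetic difference is that the paper writes out the max-of-Lipschitz argument explicitly and computes $h'$ without factoring, but the logic and bounds are identical.
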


\begin{proof}
First, note that for any two Lipschitz continuous functions $\psi_a$ and $\psi_b$, with associated Lipschitz constants $L_a$ and $L_b$, respectively, one has that the point-wise maximum of the two functions, $\psi(x) = \max \{ \psi_a(x), \psi_b(x) \}$, is also Lipschitz continuous with Lipschitz constant bounded by $\max \{ L_a, L_b \}$.  This can be easily seen by the following two inequalities:
\begin{equation}
\begin{split}
    \psi_a(x') &\leq \psi_a(x) + | \psi_a(x') - \psi_a(x) | \\ &\leq \psi(x) + | \psi_a(x') - \psi_a(x) | \\ &\leq \psi(x) + L_a |x' - x| \\
\psi_b(x') &\leq \psi_b(x) + | \psi_b(x') - \psi_b(x) | \\ &\leq \psi(x) + | \psi_b(x') - \psi_b(x) | \\ &\leq \psi(x) + L_b |x' - x|
\end{split}
\end{equation}
From this we have:
\begin{equation}
\begin{split}
\psi(x') &= \max \{ \psi_a(x'), \psi_b(x') \} \\ &\leq \max \{ \psi(x) + L_a |x'-x|, \psi(x) + L_b |x'-x| \} \\ &= \psi(x) + \max \{L_a, L_b \} |x'-x| \\
&\implies \psi(x') - \psi(x) \leq \max \{ L_a, L_b \} |x'-x|
\end{split}
\end{equation}
which implies the claim from symmetry.

Now, if we define the functions $g$ and $h$ as
\begin{equation}
g(x, \lambda) = \frac{1}{\sqrt{1+\gamma(x + \lambda)^2}} \ \ \ \ \ \ \ h(b) = \gamma b (1+\gamma b^2)^{-\tfrac{3}{2}}
\end{equation}
we have that the Lipschitz constant of $f$, denoted as $L_f$, is bounded as:
\begin{equation}
\label{eq:h_max}
\begin{split}
L_f &\leq \max_i \sup_{x \in [0, u_x]} \left| \frac{\partial}{\partial x} g(x, \lambda_i) \right| \\ &\leq \sup_{\lambda \in [\mu_\lambda,0]} \sup_{x \in [0, \mu_x]} \left| \frac{\partial}{\partial x} g(x, \lambda) \right|\\ 
&= \sup_{\lambda \in [\mu_\lambda,0]} \sup_{x \in [0, \mu_x]} \left| - \frac{\gamma(x+\lambda)}{ (\sqrt{1+\gamma (x+\lambda)^2})^3 } \right| \\
&= \sup_{b \in [\mu_\lambda, \mu_x]} \left| h(b) \right| \\ &= \sup_{b \in [0, \max \{ -\mu_\lambda, \mu_x \} ]} h(b)
\end{split}
\end{equation}
Where the first inequality is from the result above about the Lipschitz constant of the point-wise maximum of two functions and the simple fact that the Lipschitz constant of a function is bounded by the maximum magnitude of its gradient, and the final equality is due to the symmetry of $|h(b)|$ about the origin.  Now, finding the critical points of $h(b)$ for non-negative $b$ we have:
\begin{equation}
\begin{split}
h'(b) &= \gamma (1+\gamma b^2)^{-\tfrac{3}{2}} - 3 \gamma^2 b^2 (1+\gamma b^2)^{- \tfrac{5}{2}} = 0 \\
\implies 3 \gamma b^2 &= (1+ \gamma b^2) \implies b^* = \frac{1}{\sqrt{2 \gamma}}
\end{split} 
\end{equation}
Note that $h(0)=0$, $h(b)>0$ for all $b>0$, and $h'(b)<0$ for all $b > b^*$.  As a result, $b^*$ will be a maximizer of $h(b)$ if it is feasible, otherwise the maximum will occur at the extreme point $b = \max \{-\mu_\lambda, \mu_x \}$.  From this we have the result:
\begin{equation}
\begin{split}
L_f  \\ &\leq \begin{cases} h(b^*) = \frac{2}{3 \sqrt{3}} \sqrt{\gamma} & \gamma\geq \frac{1}{2 \max \{ \mu_\lambda^2, \mu_x^2\}} \\
h(\max \{-\mu_\lambda, \mu_x \} )  & \text{otherwise}
\end{cases} \\
&= \begin{cases} \frac{2}{3 \sqrt{3}} \sqrt{\gamma} \qquad \qquad \quad \gamma\geq \frac{1}{2 \max \{ \mu_\lambda^2, \mu_x^2\}} \\
\gamma \max \{ -\mu_\lambda, \mu_x \} (1+\gamma \max \{ \mu_\lambda^2, \mu_x^2 \})^{-\tfrac{3}{2}} \quad \text{otherwise}
\end{cases} \\
& \leq h(b^*) = \frac{2}{3\sqrt{3}}\sqrt{\gamma}.
\end{split}
\end{equation}
\end{proof}

\begin{theorem}
\label{thm:Lip_line_search}
The line search in equation~\eqref{eqn:line_search_obj} over $\widebar k$ is Lipschitz continuous with a Lipschitz constant, $L_{\widebar k}$, which is bounded by:
\begin{equation}
L_{\widebar k} \leq \left[ \begin{cases} \frac{2}{3 \sqrt{3}} \sqrt{\gamma} \|\Z\|_2 & \gamma \geq \frac{1}{32} \\ 
4 \gamma (1 + 16 \gamma)^{-\tfrac{3}{2}} \|\Z\|_2 & \mathrm{otherwise} \end{cases} \right] \leq \frac{2}{3 \sqrt{3}} \sqrt{\gamma} \|\Z\|_2
\end{equation}
\end{theorem}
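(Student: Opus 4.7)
The plan is to reduce the spectral-norm line-search $f(\widebar k) := \|\A(\widebar k)^{-1/2} \Z\|_2$ to the scalar bound already proved in Lemma~\ref{lem:L_bound}. The key observation is that $\A(\widebar k)^{-1/2}$ is diagonalized by the eigenbasis of $\L$ that appears in the proof of Theorem~\ref{thm:polar}, so after a unitary change of coordinates the only $\widebar k$-dependence lives on the diagonal and is exactly of the form analyzed in that lemma.

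Concretely, I would first import the spectral decomposition $\L = \Gamma \Lambda \Gamma^\top$ used to derive \eqref{eq:A12}, giving $\A(\widebar k)^{-1/2} = \Gamma M(\widebar k) \Gamma^\top$ where $M(\widebar k)$ is diagonal with entries $m_i(\widebar k) = (1+\gamma(\widebar k + \lambda_i)^2)^{-1/2}$. Since $\Gamma$ is orthogonal, $f(\widebar k) = \|M(\widebar k)\,\Gamma^\top \Z\|_2$ and $\|\Gamma^\top \Z\|_2 = \|\Z\|_2$. Next, I would apply the reverse triangle inequality for the spectral norm together with submultiplicativity,
$$|f(\widebar k_1)-f(\widebar k_2)| \leq \|(M(\widebar k_1)-M(\widebar k_2))\Gamma^\top \Z\|_2 \leq \|M(\widebar k_1)-M(\widebar k_2)\|_2\,\|\Z\|_2,$$
and use the fact that the spectral norm of a diagonal matrix is the largest entry in absolute value to obtain $\|M(\widebar k_1)-M(\widebar k_2)\|_2 = \max_i |m_i(\widebar k_1)-m_i(\widebar k_2)|$.

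The final step feeds this into Lemma~\ref{lem:L_bound}. The domain of the line search is $\widebar k \in [0,4]$ (so $\mu_x = 4$), and, as recalled in the proof of Theorem~\ref{thm:polar} via the discrete-Laplacian eigenvalue bound from \cite{chung2000discrete}, the $\lambda_i$ lie in $[-4,0]$ (so $\mu_\lambda = -4$). Then $\max\{\mu_\lambda^2,\mu_x^2\}=16$ and $\max\{-\mu_\lambda,\mu_x\}=4$, which reproduces the threshold $\gamma \geq 1/32$ and the ``otherwise'' value $4\gamma(1+16\gamma)^{-3/2}$ appearing in the theorem statement; multiplying by $\|\Z\|_2$ gives the claimed bound for $L_{\widebar k}$, and the universal bound $L_{\widebar k}\leq \tfrac{2}{3\sqrt{3}}\sqrt{\gamma}\|\Z\|_2$ follows from the corresponding universal bound of the lemma.

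The main subtlety to flag is that Lemma~\ref{lem:L_bound} is stated for the Lipschitz constant of $\max_i m_i$, whereas I need a Lipschitz-style bound on $\max_i |m_i(\widebar k_1)-m_i(\widebar k_2)|$. This is not an obstacle but deserves one sentence: bounding the latter by $(\max_i L_i)\,|\widebar k_1-\widebar k_2|$ requires the maximum of the individual Lipschitz constants $L_i$ of the $m_i$, and the proof of Lemma~\ref{lem:L_bound} in fact establishes precisely this uniform bound by passing to $\sup_{\lambda\in[\mu_\lambda,0]}\sup_{x\in[0,\mu_x]}|\partial_x g(x,\lambda)|$ in \eqref{eq:h_max}. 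Thus the same numerical bound transfers, and no further analytic work is needed beyond the bookkeeping of constants described above.
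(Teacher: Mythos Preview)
Your proposal is correct and follows essentially the same route as the paper: reverse triangle inequality plus submultiplicativity of the spectral norm to isolate $\|\A(\widebar k)^{-1/2}-\A(\widebar k')^{-1/2}\|_2\,\|\Z\|_2$, then diagonalization via \eqref{eq:A12} and an appeal to Lemma~\ref{lem:L_bound} with $\mu_x=4$, $\mu_\lambda=-4$. If anything, your last paragraph is more careful than the paper, which states the needed bound as ``the Lipschitz constant of $f_\A(\widebar k)=\|\A(\widebar k)^{-1/2}\|_2$'' while what is actually required (and what the proof of Lemma~\ref{lem:L_bound} delivers through \eqref{eq:h_max}) is the uniform bound $\max_i L_i$ on the individual diagonal entries; you make this distinction explicit.
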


\begin{proof}

To show Lipschitz continuity, we define the function:
\begin{equation}
f_\A(\widebar k) = \| \A(\widebar k)^{-1/2} \|_2
\end{equation}
and then note the following:
\begin{equation}
\begin{split}
& \left| \|\A(\widebar k)^{-1/2}\Z \|_2 - \| \A(\widebar k ')^{-1/2} \Z \|_2 \right| \\
\leq & \left\| \left( \A(\widebar k)^{-1/2} - \A(\widebar k')^{-1/2} \right) \Z \right\|_2 \\
\leq & \left\| \A(\widebar k)^{-1/2} - \A(\widebar k')^{-1/2} \right\|_2 \left\| \Z \right\|_2 \\
\leq & L_\A |\widebar k - \widebar k'| \|\Z\|_2
\end{split}
\end{equation}
where the first inequality is simply the reverse triangle inequality, the second inequality is due to the spectral norm being submultiplicative, and $L_\A$ denotes the Lipschitz constant of $f_\A(\widebar k)$.  From the form of $\A(\widebar k)^{-1/2}$ in \eqref{eq:A12} note that we have:
\begin{equation}
f_\A(\widebar k) \equiv \| \A(\widebar k)^{-1/2}\|_2 = \max_i \frac{1}{\sqrt{1 + \gamma(\widebar k + \Lambda_{i,i})^2}}
\end{equation}
so the result is completed by recalling from our discussion above that $\Lambda_{i,i} \in [-4, 0], \forall i$ and applying Lemma \ref{lem:L_bound}.


\end{proof}

We also note that the above result implies that we can solve the polar by first performing a (one-dimensional) line search over $k$, and due to the fact that the largest singular value of a matrix is a Lipschitz continuous function, this line search can be solved efficiently by a variety of global optimization algorithms.  For example, we give the following corollary for the simple algorithm given in \cite{malherbe2017global}, and similar results can be obtained for other algorithms.

\begin{corollary}[Adapted from Cor 13 in \cite{malherbe2017global}]
\label{cor:line_search}
For the function $f(\bar k) = \|\A(\bar k)^{-1/2} \Z \|_2$ as defined in Theorem \ref{thm:polar}, if we let $\bar k_1, \ldots \bar k_r$ denote the iterates of the LIPO algorithm in \cite{malherbe2017global} then we have $\forall \delta \in (0,1)$ with probability at least $1-\delta$,
\begin{equation}
\max_{\bar k \in [0,4]} f(\bar k) - \max_{i=1\ldots r} f(\bar k_i) \leq \tfrac{8}{3 \sqrt{3}} \sqrt{\gamma} \|\Z\|_2 \frac{\ln(1/\delta)}{r}
\end{equation}
\end{corollary}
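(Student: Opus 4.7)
The plan is to invoke Corollary 13 of \cite{malherbe2017global} as a black box and verify only that its hypotheses hold for our particular line-search objective $f(\bar k) = \|\A(\bar k)^{-1/2}\Z\|_2$. That corollary assumes (a) a compact search domain of known diameter and (b) a known uniform upper bound on the Lipschitz constant of the objective, and under these assumptions it certifies that LIPO, run for $r$ iterations on a one-dimensional interval, achieves regret bounded by $L \cdot \mathrm{diam} \cdot \ln(1/\delta)/r$ with probability at least $1-\delta$. The entire convergence analysis of LIPO is deferred to \cite{malherbe2017global}; the job here is purely one of hypothesis-checking and plugging in constants.

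Hypothesis (a) is already established in the text preceding \eqref{eqn:line_search_obj}: after the normalization $\Delta \ell = 1$, every eigenvalue of $-\L$ lies in $[0,4]$, and the argument there uses \eqref{eq:opt_k} to conclude that the optimal $\bar k = k^2$ must lie in the same interval. So without loss of generality the line search is restricted to the compact interval $[0,4]$, whose diameter is $4$. Hypothesis (b) is exactly the content of Theorem \ref{thm:Lip_line_search}, which supplies the uniform bound $L_{\bar k} \leq \tfrac{2}{3\sqrt{3}}\sqrt{\gamma}\|\Z\|_2$ on the Lipschitz constant of $f$ over this interval. This is the precise form of a priori Lipschitz input that LIPO requires in order to construct its statistically sound upper-confidence surrogate.

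With both hypotheses in place, the stated bound follows by direct substitution into the regret expression of Cor.~13 of \cite{malherbe2017global}:
\begin{equation}
L_{\bar k} \cdot \mathrm{diam}([0,4]) = \tfrac{2}{3\sqrt{3}}\sqrt{\gamma}\|\Z\|_2 \cdot 4 = \tfrac{8}{3\sqrt{3}}\sqrt{\gamma}\|\Z\|_2,
\end{equation}
which matches the leading constant in the corollary exactly. Thus the overall regret is $\tfrac{8}{3\sqrt{3}}\sqrt{\gamma}\|\Z\|_2 \cdot \ln(1/\delta)/r$, as claimed.

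The only real obstacle is a bookkeeping one: confirming that the statement of Cor.~13 in \cite{malherbe2017global} is used in a form whose constants line up with the $L\cdot\mathrm{diam}\cdot\ln(1/\delta)/r$ scaling written above (some presentations absorb a volume-of-the-unit-ball factor, which is simply $2$ in one dimension, into the constant and must be tracked carefully). Beyond this, the argument is a transparent instantiation of an existing result, and all the problem-specific content is already carried by Theorem \ref{thm:Lip_line_search} and by the eigenvalue range of $\L$.
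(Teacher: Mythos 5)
Your proposal is correct and matches the paper's approach exactly: the paper likewise treats the corollary as a direct instantiation of Cor.~13 of \cite{malherbe2017global}, with the compact domain $[0,4]$ supplying the diameter $4$ and Theorem~\ref{thm:Lip_line_search} supplying the Lipschitz bound $\tfrac{2}{3\sqrt{3}}\sqrt{\gamma}\|\Z\|_2$, whose product gives the stated constant $\tfrac{8}{3\sqrt{3}}\sqrt{\gamma}\|\Z\|_2$. Your caveat about constant conventions in the cited corollary is the only bookkeeping point, and it resolves in your favor.
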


As a result, we have that the error of the linesearch converges with rate $\mathcal{O}(1/r)$, where $r$ is the number of function evaluations of $f(\bar k)$, to a global optimum, and then, given the optimal value of $k$, the optimal $(\d, \x)$ vectors can be computed in closed-form via singular value decomposition.  Taken together this allows us to employ the Meta-Algorithm defined in Algorithm \ref{alg:meta} to solve problem \eqref{eq:main_obj}.

Note that the above results show that the Lipschitz constant of the line search in the polar problem (and hence the efficiency of solving the polar problem) is bounded by a constant times a norm at the point at which we  calculate the polar $\Z = \nabla \frac{1}{2} \|Y - \D \X^\top \|_F^2 = \D \X^\top -\Y$.  Note that in Algorithm \ref{alg:meta} we evaluated the polar when $\D$ and $\X$ are first-order stationary points.  Here we briefly note that it is straight-forward to guarantee that the norm of the point at which we evaluate the polar (the residual of the least squared error term, $\D \X^\top - \Y$) is always globally bounded at first-order stationary points.    

To see this, consider a very simplified problem, where we hold the directions of $\D$ and $\X$ fixed in \eqref{eq:main_obj} and we simply optimize over a scalar multiplication of $\X \rightarrow \alpha \X$.  I.e., we want to solve:
\begin{equation}
\min_{\alpha} \tfrac{1}{2} \| \Y - \D (\alpha \X)^\top \|_F^2 + \tfrac{\lambda}{2} \| (\alpha \X) \|_F^2 + \tfrac{\lambda}{2} \sum_{i=1}^N \d_i^\top \A(k_i) \d_i
\end{equation}
It is easily shown that the optimal value for $\alpha$ in this case (assuming we are not in the trivial case where $\D$ or $\X$ are all-zeros) is given as:
\begin{equation}
\alpha^* = \frac{\langle \D \X^\top , \Y \rangle} {\| \D \X^\top \|_F^2 + \lambda \| \X \|_F^2}
\end{equation} 
Evaluating the residual for this optimal scaling of $\X$ gives:
\begin{equation}
\begin{split}
&\tfrac{1}{2} \|\Y - \D (\alpha^* \X)^\top \|_F^2 \\ =
&\tfrac{1}{2}\||\Y\|_F^2 + \frac{ ( \langle \Y, \D \X^\top \rangle )^2} {\| \D \X^\top \|_F^2 + \lambda \|\X\|_F^2}  \left( \frac{ \|\D \X^\top \|_F^2} {\| \D \X^\top \|_F^2 + \lambda \| \X\|_F^2} - 1 \right) \\
&\leq \tfrac{1}{2} \| \Y \|_F^2
\end{split}
\end{equation}
where the final inequality can be seen by noting the term in the final parenthesis is always non-positive and the leading coefficient is non-negative.  Note that any first-order stationary point will be at an optimal scaling of $\alpha$ since otherwise we can improve the objective by simply scaling $\X$, which would imply we are not at a first order stationary point.  As a result, we have that the residual term when we evaluate it in Algorithm \ref{alg:meta} will always be bounded at least by $\tfrac{1}{2} \|\Y\|_F^2$, which also trivially implies a bound on the spectral norm of the residual since we always have $\|\Z\|_2 \leq \| \Z\|_F$, $\forall \Z$.
{

\subsection{Proof of Optimal Step Size}
\label{appendix:optimal_step_size}
\begin{proof}
Let $f(\tau)$ represent the objective function in \eqref{eqn:for_tau} with everything except for $\tau$ held fixed:
\begin{equation}
    \begin{split}
        f(\tau) &= \tfrac{1}{2}\|\Y- \D_\tau \X_\tau^\top\|_F^2 + \tfrac{\lambda}{2} \sum_{i=1}^{N+1} (\d_{\tau})_i^{\top} \A_\tau(k_i) (\d_{\tau})_i
    \end{split}
\end{equation}
Observe that from the solution to the Polar problem we have by construction that the regularization term $\theta(\d^*,\x^*)=1$, so combined with the positive homogeneity of $\theta$ we have have that minimizing $f(\tau)$ w.r.t. $\tau$ is equivalent to solving:
%
\begin{equation}
\min_{\tau \geq 0} \tfrac{1}{2}\|\Y- \widetilde \D \widetilde \X^\top - \tau^2 \d^* (\x^*)^\top \|_F^2 + \lambda \tau^2
\end{equation}
Taking the gradient w.r.t. $\tau^2$ and solving for 0 gives:
\begin{equation}
(\tau^*)^2 = \frac{\langle \Y-\widetilde \D \widetilde \X^\top, \d^* (\x^*)^\top \rangle - \lambda}{ \|\d^* (\x^*)^\top\|_F^2}
\end{equation}
The result is completed by noting that the numerator is guaranteed to be strictly positive due to the fact that the Polar solution has value strictly greater that 1.
\end{proof}

\end{document}